\definecolor{light-gray}{gray}{0.95}
\newtcbox{\mymath}[1][]{%
    nobeforeafter, math upper, tcbox raise base,
    enhanced, colframe=blue!30!black,
    colback=blue!30, boxrule=1pt,
    #1}
\DeclareMathOperator{\var}{Var}
\DeclareMathOperator{\cov}{Cov}
\algnewcommand{\LineComment}[1]{\State \(\triangleright\) #1}
\newcommand{\change}[1]{\textcolor{black}{#1}}
\newtheorem*{rep@theorem}{\rep@title}
\newcommand{\newreptheorem}[2]{%
\newenvironment{rep#1}[1]{%
 \def\rep@title{#2 \ref{##1}}%
 \begin{rep@theorem}}%
 {\end{rep@theorem}}}
\newtheorem{theorem}{Theorem}
\newtheorem{proposition}{Proposition}
\newtheorem{fact}{Fact}
\newtheorem{definition}{Definition}
\newtheorem{claim}{Claim}
\newcommand{\norm}[1]{\left\lVert#1\right\rVert}
\newcommand{\R}{\mathbb{R}}
\newcommand{\E}{\mathbb{E}}
\newcommand{\bs}[1]{\boldsymbol{#1}}
\newcommand{\bv}[1]{\mathbf{#1}}
\DeclareMathOperator{\argmax}{arg\,max}
\DeclareMathOperator*{\argmin}{arg\,min}
\newcommand{\x}{\mathbf{x}}
\newcommand{\w}{\mathbf{w}}
\newcommand{\g}{\mathbf{g}}
\renewcommand{\O}{\mathcal{O}}
\newcommand{\nnz}{\mathtt{nnz}}
\newcommand{\K}{\mathrm{K}} 
\newcommand{\M}{\mathrm{M}} 
\title {Faster Kernel Interpolation for Gaussian Processes}
\author{Mohit Yadav, Daniel Sheldon, Cameron Musco \\
\normalsize{University of Massachusetts Amherst} \\ 
\normalsize{\texttt{\{ymohit, sheldon, cmusco\}@cs.umass.edu}}}
\date{}
\begin{document}
\maketitle
\begin{abstract}%
 A key challenge in scaling Gaussian Process (GP) regression to massive datasets is that exact inference requires computation with a dense $n \times n$ kernel matrix, where $n$ is the number of data points.
  Significant work focuses on approximating the kernel matrix via interpolation using a smaller set of $m$ ``inducing points''.
  \emph{Structured kernel interpolation} (SKI) is among the most scalable methods: by placing inducing points on a dense grid and using structured matrix algebra, SKI achieves per-iteration time of $\mathcal{O}(n + m \log m)$ for approximate inference. This linear scaling in $n$ enables inference for very large data sets; however the cost is \emph{per-iteration}, which remains a limitation for extremely large $n$.
We show that the SKI per-iteration time can be reduced to $\mathcal{O}(m \log m)$ after a single $\mathcal{O}(n)$ time precomputation step by reframing SKI as solving a natural Bayesian linear regression problem with a fixed set of $m$ compact basis functions. 
\change{With per-iteration complexity \emph{independent of the dataset size $n$} for a fixed grid, our method scales to truly massive data sets. We demonstrate speedups in practice for a wide range of $m$ and $n$ and apply the method to GP inference on a three-dimensional weather radar dataset with over 100 million points.} Our code is available at \url{https://github.com/ymohit/fkigp}.

\end{abstract}

\section{Introduction}
\label{sec:introduction}

GPs are a widely used and principled class of methods for predictive modeling.
They have a long history in spatial statistics and geostatistics for spatio-temporal interpolation problems~\cite{matheron1973intrinsic,cressie2008fixed}. They were later adopted in ML as general-purpose predictive models~\cite{rasmussen2004gaussian}, motivated in part by connections to neural networks~\cite{neal1995bayesian,williams1997computing}. More recently, similar connections have been identified between GPs and deep networks~\cite{lee2018deep,matthews2018gaussian,garriga2018deep,novak2018bayesian,cheng2019bayesian}.
GPs can be used for general-purpose Bayesian regression~\cite{williams1996gaussian,rasmussen1999evaluation}, classification~\cite{williams1998bayesian}, and many other applications~\cite{rasmussen2004gaussian,wilson2013gaussian}.

A well-known limitation of GPs is running-time scalability. The basic inference and learning tasks require linear algebraic operations (e.g., matrix inversion, linear solves, computing log-determinants) with an $n \times n$ kernel matrix, where $n$ is the number of data points. 
Exact computations require $\Theta(n^3)$ time --- e.g., using the Cholesky decomposition --- which severely limits applicability to large problems. Hence, a large amount of work has been devoted to improving scalability of GP inference and learning through approximation. 
Most of this work is based on the idea of forming an approximate kernel matrix that includes low-rank structure, e.g., through the use of {inducing points} or random features \cite{williams2001using,Snelson2005SparseGP,quinonero2005unifying,rahimi2008random,le2013fastfood}. 
With rank-$m$ structure, the running time of key tasks can be reduced to $\Theta(nm^2 + m^3)$ \cite{quinonero2005unifying}. 
However, this scaling with $n$ and $m$ continues to limit the size of input data sets that can be handled, the approximation accuracy, or both.

Structured kernel interpolation (SKI) is a promising approach to further improve the scalability of GP methods \change{on relatively low-dimensional data}~\cite{kissgp}. 
In SKI, $m$ inducing points are placed on a regular grid, which, when combined with a stationary kernel covariance function, imposes extra structure in the approximate kernel matrix. Kernel matrix operations on the grid (i.e., multiplying with a vector) require only $\O(m \log m)$ time, and interpolating from the grid requires $\O(n)$ time. By combining structured kernel operations with iterative methods for numerical linear algebra, the running time to solve core GP tasks becomes $\O(k(n + m \log m))$ where $k$ is the number of iterations, and is usually much less than $m$ or $n$. 
The modest per-iteration runtime of $\O(n + m \log m)$ allows the modeler to select a very large number of inducing points.

We show how to further improve the scalability of SKI with respect to $n$ and scale to truly massive data sets.
We first show that the SKI approximation corresponds to exact inference in a Bayesian regression problem with a fixed set of $m$ compact spatial basis functions. 
This lets us reduce the per-iteration runtime to $\O(m \log m)$ --- completely \emph{independent} of $n$ --- after a one-time preprocessing cost of $\O(n)$ to compute sufficient statistics of the regression problem.  However, naive application of these ideas introduces undesirable trade-offs: while the per-iteration cost is better, we must solve linear systems that are computationally less desirable than the original ones --- e.g., they are asymmetric instead of symmetric, or have worse condition number. To avoid these trade-offs, we contribute novel ``factorized" conjugate gradient and Lanczos methods, which allow us to solve the \emph{original} linear systems in $\O(m \log m)$ time per-iteration instead of $\O(n + m \log m)$. 

Our techniques accelerate SKI inference and learning across a wide range of settings. 
They apply to each of the main sub-tasks of GP inference: computing the posterior mean, posterior covariance, and log-likelihood. \change{We demonstrate runtime improvements across different data sets and grid sizes, and the ability to scale GP inference to datasets well outside the typical range that can be handled by SKI or other approaches, such as inducing point methods. For example, we demonstrate the ability to perform GP inference on a three-dimensional weather radar dataset with $120$ million data points, using a grid of $128,000$ inducing points.}

\subsection{Related Work}

\change{Outside of SKI and its variants \cite{wilson2015thoughts,gardner2018product}, a variety of scalable GP methods have been proposed. Most notable are 
 inducing point methods (sparse GP approximations), such as the Nystr\"{o}m, SoRs, FITC, and SMGP methods \cite{williams2001using,Snelson2005SparseGP,quinonero2005unifying,walder2008sparse}. These methods require either $\Omega (nm^2)$ time for direct solves, or $\Omega(nm)$ per-iteration cost if using iterative methods. 
 Our approach significantly improves the dependence on both $n$ and $m$ to just $\mathcal{O}(n)$ preprocessing time and $\O(m \log m)$ per-iteration cost.}

\change{While the  above methods generally do not leverage structured matrix methods like SKI, especially in higher dimensions, they may achieve comparable accuracy with a smaller number of inducing points. 
Directly comparing SKI to popular inducing point methods is beyond the scope of this work, however prior work shows significant performance gains on large, relatively low-dimensional datasets \cite{kissgp,ScalableLD}. We note that very recent work \cite{meanti2020kernel} seeks to push the limits of inducing point methods via careful systems and hardware level implementations. Like our work, they scale to datasets with over 100 million points.}

\change{Many scalable GP methods have also been proposed in the geostatistics literature -- see \cite{heaton2019case} for a survey. These include structured methods when observations lie on a grid  \cite{guinness2017circulant,stroud2017bayesian}. Most closely related to our work is \emph{fixed rank kriging} (FRK) \cite{cressie2008fixed}, which can be viewed as a generalization of our Bayesian regression interpretation of SKI. Like inducing point methods, FRK using $m$ basis functions requires $\Omega (nm^2)$ time.
We show that SKI can be viewed as a special case of FRK with a fixed kernel function and  a particular set of basis functions arising from interpolation. These two choices allow our faster runtime, through the application of structured matrix techniques and factorized iterative methods, which in turn significantly increases the number of basis functions that can be used.}

\section{Background}
\label{sec:background}

\noindent \textbf{Notation:} Throughout we use bold letters to represent vectors and capitals to represent matrices. $I \in \R^{n \times n}$ represents the identity matrix, with dimension apparent from context. For $M \in \R^{p \times r}$, $mv(M)$ denotes the time required to multiply $M$ by any vector in $\R^r$.

\subsection{Gaussian Process Regression}
\label{sec:gp}

In GP regression~\cite{rasmussen2004gaussian}, response values are modeled as noisy measurements of a random function $f$ on input data points.
Let $\mathcal{D}$ be a set of $n$ points $\bv x_1,\ldots,\bv x_n \in \R^d$ with corresponding response values $y_1,\ldots,y_n \in \R$.
Let $\bv{y} \in \R^{n}$ have its $i^{th}$ entry equal to $y_i$ and $X \in \R^{n \times d}$ have its $i^{th}$ row equal to $\bv x_i$.
A Gaussian process with kernel (covariance) function $k(\mathbf{x}, \mathbf{x}^{\prime}): \R^d \times \R^d \rightarrow \R$ is a random function $f$ such, for any $\bv x_1,\ldots, \bv x_n \in \R^d$:
\begin{align}
\label{eq:gp_prior}
\bv{f} = [f(\mathbf{x}_1),...,f(\mathbf{x}_n)] \sim \mathcal{N}(0, K_{X}),
\end{align}
where $K_{X} = [k(\mathbf{x}_i,\mathbf{x}_j)]_{i,j=1}^{n} \in \mathbb{R}^{n \times n}$ is the kernel (covariance) matrix on the data points $X$.
We assume without loss of generality that $f$ is zero-mean.
The responses $\bv y$ are modeled as measurements of $\bv f_X$ with i.i.d. Gaussian noise, i.e.,
$\bv{y} \sim \mathcal{N}(\bv{f}, \sigma^2 I)$.

The posterior  distribution of $f$ given the data $\mathcal{D} = (X, \mathbf{y})$ is itself a Gaussian process. The standard GP inference tasks are to compute the posterior mean and covariance and the log-likelihood, given in Fact~\ref{fact:gp_inference}.

\begin{mdframed}[backgroundcolor=light-gray] 
\begin{fact}[\textbf{Exact GP Inference}]\label{fact:gp_inference} The posterior mean, covariance, and log likelihood for Gaussian process regression are given by:
\begin{align*}
& \textbf{mean: } \mu_{f|\mathcal{D}}(\mathbf{x}) = \bv{k}_\bv{x}^T \bv{z}\\ 
& \textbf{covariance: }  k_{f|\mathcal{D}}(\mathbf{x}, \mathbf{x}^{\prime}) = k(\mathbf{x}, \mathbf{x}^{\prime})
- \bv{k}_{\bv{x}}^T ({K}_{X}+\sigma^2 I)^{-1}  \bv{k}_{\mathbf{x}^{\prime}}\nonumber\\
& \textbf{log likelihood: } \log \Pr(\bv{y}) = -\frac{1}{2} [\log\det ({K}_{X}+\sigma^2 I)+ \bv{y}^T \bv{z}+ n \log (2\pi) ]\nonumber
\end{align*}
where $\bv{k}_{\bv x} \in \R^{n}$ has $i^{th}$ entry $ k(\bv{x},\bv{x}_i)$ and $\bv{z} = ({K}_{X}+\sigma^2 I)^{-1} \mathbf{y}$.
\end{fact}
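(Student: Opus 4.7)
The plan is to derive all three expressions by setting up an appropriate joint Gaussian distribution and applying the standard formulas for (i) conditioning within a multivariate Gaussian and (ii) the log-density of a multivariate Gaussian. Nothing here is deep; the proof is essentially a bookkeeping exercise to match the stated formulas to the abstract conditional-distribution identities.

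First I would write down the joint distribution of the training responses $\bv{y}$ and a test-function value $f(\bv{x})$. Since $\bv{y} = \bv{f} + \boldsymbol{\epsilon}$ with $\boldsymbol{\epsilon} \sim \mathcal{N}(0, \sigma^2 I)$ independent of the prior $\bv{f} \sim \mathcal{N}(0, K_X)$, one checks that $\bv{y} \sim \mathcal{N}(0, K_X + \sigma^2 I)$, $\cov(f(\bv x), \bv y) = \bv k_{\bv x}^T$, and $\cov(f(\bv x), f(\bv x')) = k(\bv x, \bv x')$. Therefore $\bigl(f(\bv x), f(\bv x'), \bv y\bigr)$ is jointly Gaussian with mean zero and a block covariance whose entries are $k(\bv x, \bv x')$, $\bv k_{\bv x}$, $\bv k_{\bv x'}$, and $K_X + \sigma^2 I$.

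Next I would apply the standard Gaussian conditioning formula: if $(\bv a, \bv b)$ is jointly zero-mean Gaussian with blocks $\Sigma_{aa}, \Sigma_{ab}, \Sigma_{bb}$, then $\bv a \mid \bv b \sim \mathcal{N}(\Sigma_{ab}\Sigma_{bb}^{-1} \bv b,\, \Sigma_{aa} - \Sigma_{ab} \Sigma_{bb}^{-1} \Sigma_{ba})$. Taking $\bv a = f(\bv x)$ (or $(f(\bv x), f(\bv x'))$ for the covariance formula) and $\bv b = \bv y$, the posterior mean becomes $\bv k_{\bv x}^T (K_X + \sigma^2 I)^{-1} \bv y = \bv k_{\bv x}^T \bv z$ and the posterior covariance becomes $k(\bv x, \bv x') - \bv k_{\bv x}^T (K_X + \sigma^2 I)^{-1} \bv k_{\bv x'}$, matching the stated expressions. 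The fact that the posterior is itself a GP follows because these formulas specify a valid mean function and positive semidefinite covariance function on all input locations simultaneously.

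Finally, for the log-likelihood I would just plug into the standard multivariate Gaussian density. Since $\bv y \sim \mathcal{N}(0, K_X + \sigma^2 I)$, we have
\begin{equation*}
\log \Pr(\bv y) = -\tfrac{1}{2}\bigl[\log\det(K_X + \sigma^2 I) + \bv y^T (K_X+\sigma^2 I)^{-1} \bv y + n \log(2\pi)\bigr],
\end{equation*}
and substituting $\bv z = (K_X + \sigma^2 I)^{-1} \bv y$ gives the stated formula. The only ``obstacle'' worth flagging is keeping the independence of $\boldsymbol{\epsilon}$ and $\bv f$ explicit when computing the cross-covariance $\cov(f(\bv x), \bv y) = \cov(f(\bv x), \bv f) + 0 = \bv k_{\bv x}^T$; once that is done, the rest is a direct application of the two textbook Gaussian identities.
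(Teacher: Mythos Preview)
Your proposal is correct and follows the standard textbook derivation. The paper does not actually prove this statement: it is presented as a ``Fact'' (i.e., a known result from the GP literature, with Rasmussen and Williams cited as background), so there is no paper proof to compare against; your derivation is precisely the argument one finds in that reference.
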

\end{mdframed}

Evaluating the posterior mean $\mu_{f|\mathcal{D}}(\mathbf{x})$, covariance $k_{f|\mathcal{D}}(\mathbf{x}, \mathbf{x}^{\prime})$, and log likelihood require matrix-vector multiplication (MVM) with  $({K}_{X} + \sigma^2 I)^{-1}$, which is the major run-time bottleneck for GP inference. 
Computing this inverse directly requires $\Theta(n^3)$ time. The log likelihood requires a further computation of $\log\det ({K}_{X}+\sigma^2 I)$, which again naively takes $\Theta(n^3)$ time using the Cholesky decomposition. Computing its gradient, which is necessary e.g., in hyper-parameter tuning with gradient based methods, requires a further trace computation involving $({K}_{X} + \sigma^2 I)^{-1}$.

\paragraph{Inference Via Iterative Methods.}\label{sec:iterative}
One way to accelerate GP inference is to avoid full matrix factorization like Cholesky decomposition and instead 
use iterative methods.
Gardner et al. \cite{GPyTorchBM} detail how to compute or approximate each term in Fact~\ref{fact:gp_inference} using a modified version of the conjugate gradient (CG) algorithm.

For example, the vector $\bv{z} = ({K}_{X} + \sigma^2 I)^{-1} \bv{y} \in \R^n$ is computed by using CG to solve $({K}_{X} + \sigma^2 I)\bv{z} = \bv{y}$, which yields the posterior mean as $\mu_{f|\mathcal{D}}(\mathbf{x}) = \bv{k}_\bv{x}^T \bv{z}$, and is also used in the calculation of the log-likelihood. 

We will use two iterative algorithms in our work: (1) conjugate gradient to solve linear systems $A\bv{v} = \bv{b}$ for symmetric positive definite (SPD) $A$, (2) the Lanczos algorithm to (sometimes partially) \emph{tridiagonalize} an SPD matrix $A \in \R^{p \times p}$ as $A=QTQ^T$ where $Q \in \R^{p \times p}$ is orthonormal and $T \in \R^{p \times p}$ is tridiagonal.
Tridiagonalization is used to approximate $\log\det(\cdot)$ terms~\cite{ScalableLD,ubaru2017fast} and to compute a low-rank factorization for approximate posterior covariance evaluation~\cite{ConstantTimePD}.

Each iteration of CG or Lanczos for GP inference requires matrix-vector multiplication with $A = {K}_{X} + \sigma^2 I$, which in general takes $mv(K_X) + n = \O(n^2)$ time, but may be faster if $K_X$ has special structure. The number of iterations required to reach a given error tolerance depends on eigenspectrum of $A$, and is usually much less than $n$, often on the order of 50 to 100. It can be even lower with preconditioning~\cite{GPyTorchBM}. Recent work thus often informally considers the number of iterations to be a constant independent of $n$.
 
\subsection{Structured Kernel Interpolation}
\label{sec:ski}
For large $n$, the $\Theta(n^2)$ per-iteration cost of iterative methods is still prohibitively expensive. 
Structured kernel interpolation (SKI) is a method to accelerate MVMs by using an approximate kernel matrix with a special structure~\cite{kissgp}.
SKI approximates the kernel $k(\bv{x},\bv{x}')$ as:
\begin{equation}
  \tilde k(\bv{x},\bv{x'}) = \bv{w}_{\bv{x}}^T K_{G} \bv{w}_{\bv{x}'}
  \label{eq:ski}
\end{equation}
where $K_{G} \in \R^{m \times m}$ is the kernel matrix for the set of $m$ points on a dense $d$-dimensional grid, and
the vector $\bv{w}_\bv{x} \in \R^m$ contains interpolation weights to interpolate from grid points to arbitrary $\bv x \in \R^d$.
I.e., the kernel inner product between any two points 
is approximated by interpolating kernel inner products among grid points.

SKI can use any interpolation strategy (e.g., linear or cubic); typically, the strategy is local, so that $\bv{w}_\bv{x}$ has only a constant number of non-zero entries corresponding to the grid points closest to $\bv x$.
E.g., for linear interpolation, $\bv{w}_x$ has $2^d$ non-zeros.
Let $W \in \R^{n \times m}$ have $i^{th}$ row equal to $\bv{w}_{\bv{x}_i}$. SKI
approximates the true kernel matrix $K_{X}$ as $\tilde K_{X} = W K_{G} W^T$.
Plugging this approximation directly into the GP inference equations of Fact \ref{fact:gp_inference} yields the SKI inference scheme  in Def.~\ref{def:kissgp}.

\begin{mdframed}[backgroundcolor=light-gray] 
\begin{definition}[\textbf{SKI Inference}]\label{def:kissgp}
The SKI approximate inference equations are given by:
\begin{align*}
 & \textbf{mean: } \mu_{f|\mathcal{D}}(\mathbf{x}) \approx \bv{w}_{\bv{x}}^T K_G W^T \bv{\tilde z} \\
& \textbf{covariance: } k_{f|\mathcal{D}}(\mathbf{x}, \mathbf{x}^{\prime}) \approx \bv{w}_\bv{x}^T K_G \bv{w}_{\bv x'} - \bv{\tilde k}_\bv{x}^{T} (\tilde K_{X}+ \sigma^2 I)^{-1} \bv{\tilde k}_\bv{x} \nonumber \\
& \textbf{log likelihood: } \log \Pr(\bv{y}) \approx -\frac{1}{2} [\log\det ( \tilde K_{X}+ \sigma^2 I) + \bv{y}^T \bv{\tilde z} + n \log (2\pi) ]\nonumber
\end{align*}
where $\tilde K_X = W K_{G}W^{T}$ and $\bv{\tilde z} = \left (\tilde K_X + \sigma^2 I\right )^{-1} \mathbf{y}$ and $\bv{\tilde k}_\bv{x} = W K_{G} \bv{w}_{\bv{x}}$.
\end{definition}
\end{mdframed}

 \paragraph{SKI Running Time and Memory.}\label{sec:skiRuntime}
 
The SKI method admits efficient approximate inference due to: (1) the sparsity of the interpolation weight matrix $W$, and (2) the structure of the on-grid kernel matrix $K_G$.
The cost per iteration of CG or Lanczos is $\mathcal{O}(mv(\tilde K_X + \sigma^2 I)) = \mathcal{O}(mv(W) + mv(K_G) + n)$.
This runtime is $\mathcal{O}(n + m \log m)$ per iteration assuming: (1) $W$ has $\mathcal{O}(1)$ entries per row and so $mv(W) = \mathcal{O}(n)$, and (2) $K_G$ is multilevel Toeplitz, so $mv(K_G) = \mathcal{O}(m \log m)$ via fast Fourier transform \cite{lee1986fast}. The matrix $K_G$ is multilevel Toeplitz (also known as block Toeplitz with Toeplitz blocks or BTTB)  whenever $G$ is an equally-spaced grid and $k(\cdot, \cdot)$ is stationary.
The memory footprint is roughly $\nnz(W) + m + n = \O(n + m)$ to store $W$, $K_G$, and $\bv y$, respectively,  where $\nnz(A)$ denotes the number of non zeros of $A$.

Overall, the SKI per-iteration runtime of $\mathcal{O}(n + m \log m)$ significantly improves on the naive $\mathcal O(n^2)$ time required to apply the true kernel matrix $K_{X}$. However, when $n$ is very large, the $\mathcal{O}(n)$ term (for both runtime and memory) can become a bottleneck. Our main contribution is to remove this cost, giving methods with $\mathcal{O}(m \log m)$ per-iteration runtime with $\O(m)$ memory after $\mathcal{O}(n)$ preprocessing. 
\section{SKI as Bayesian Linear Regression with Fixed Basis Functions}
\label{sec:gsgp}

Our first contribution is to reformulate SKI as \emph{exact inference} in a Bayesian linear regression problem with compact basis functions associated with grid points. This lets us use standard conjugate update formulas for Bayesian linear regression to reduce SKI's per-iteration runtime to $\mathcal O(m \log m)$, with $\mathcal O(n)$ preprocessing.

\begin{wrapfigure}{R}{0.42\textwidth}
  \includegraphics[width=0.4\textwidth]{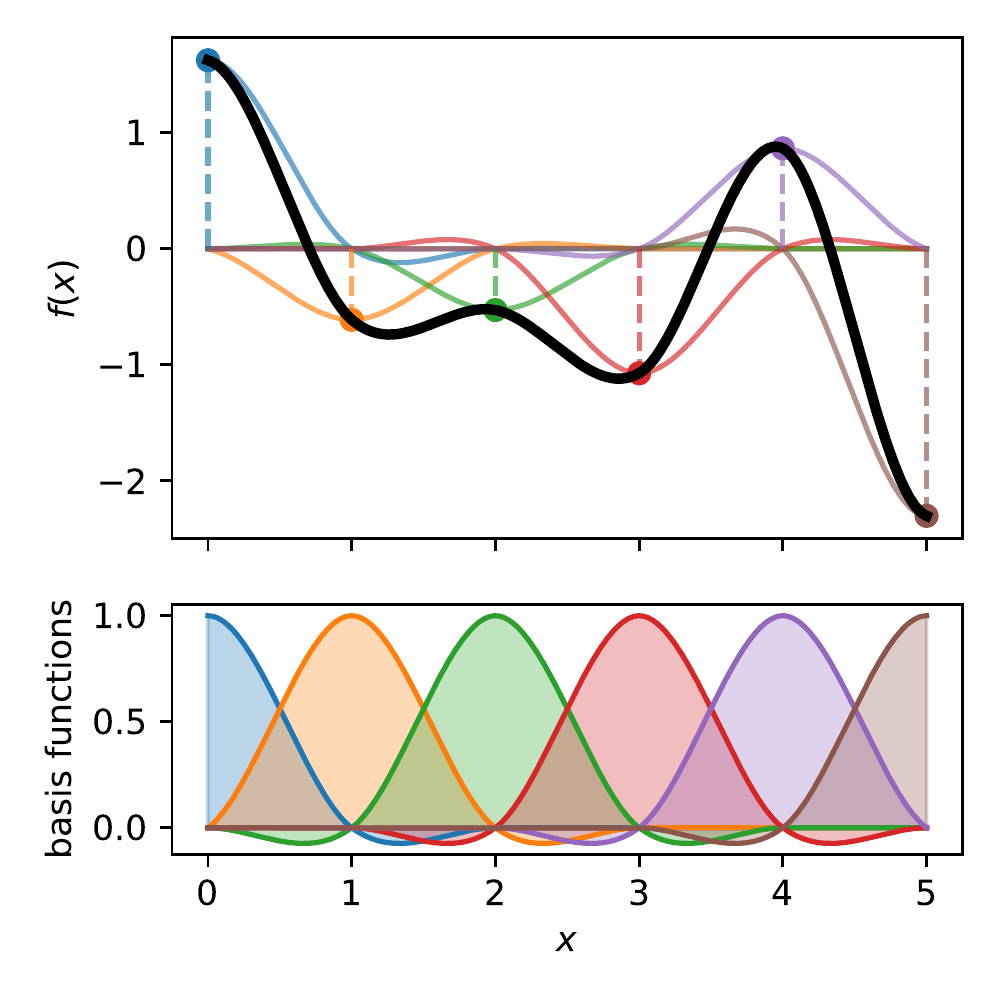}
  \caption{GSGP illustration. Bottom: basis functions for cubic interpolation are compact and centered at grid points. Top: a GSGP (thick black curve) is formed as the sum of scaled basis functions (lighter colored curves) with random weights at grid points  (vertical dashed lines) drawn from the original GP. \label{fig:gsgp}}
\end{wrapfigure} 

\begin{mdframed}[backgroundcolor=light-gray] 
\begin{definition}[\textbf{Grid-Structured Gaussian Process; GSGP}]\label{def:gsgp}
  Let $G = \{\bv g_1,\ldots, \bv g_m\} \subseteq \R^d$ be a set of grid points and $k(\mathbf{x}, \mathbf{x}^{\prime}): \R^d \times \R^d \rightarrow \R$ be a positive-definite kernel function. A \emph{grid-structured Gaussian process} $f$ is defined by the following generative process:
  \begin{align*}
    \boldsymbol{\theta} &\sim \mathcal{N}(0, K_{G}), \\
    f(\bv x) &= \bv{w}_{\bv{x}}^T \boldsymbol{\theta}, \quad \forall \bv x \in \R^d.
  \end{align*}
 where $\bv{w}_x \in \R^m$ is a vector of interpolation weights from $\bv{x}$ to the grid $G$. 
\end{definition}
\end{mdframed}

Notice that a GSGP is a classical Bayesian linear regression model. In principle, $\bv{w}$ can be any mapping from $\R^d$ to $\R^m$. However, for computational efficiency and to match the notion of interpolation, the vector $\bv{w}_{\bv{x}}$ will be taken to be the set of weights used by any fixed scheme to interpolate values from grid points to arbitrary locations $\bv x \in \R^d$.

The generative process is illustrated in Figure~\ref{fig:gsgp} for $d=1$ and cubic interpolation on an integer grid~\cite{keys1981cubic}. The basis functions $\bv{w}^j_\bv{x}$ for each grid point $j$ and for all $\bv x$ are shown in the bottom panel. The $j$th basis function is centered at $j$ and supported on $[j -2, j+2)$. For any fixed $\bv x$, the vector $\bv{w}_\bv{x}$ has at most four nonzero entries corresponding to the four nearest grid points.

The GSGP $f$ is generated by first drawing random weights $\boldsymbol{\theta}$ as the values of the \emph{original} GP --- with covariance function $k(\cdot, \cdot)$ --- at grid points. The generated function $f(\bv x)$ can be interpreted in two ways: (1) a sum of scaled basis functions $f(\bv x) = \sum_j \bs{\theta}_j \bv w^j_\bv{x}$, (2) the result of interpolating the grid values $\boldsymbol{\theta}$ to $\R^d$ using the interpolation scheme.

It is straightforward to verify the following (full derivations and proofs appear in Appendix~\ref{app:gsgp}):

\begin{claim}A GSGP with grid weights $\boldsymbol{\theta}$ drawn from a GP with covariance function $k(\bv{x}, \bv{x}')$ is itself a GP with covariance function $\tilde{k}(\bv x, \bv x') = \bv{w}_{\bv{x}}^T K_G \bv{w}_{\bv{x}'}$.
\label{claim:gsgpski}
\end{claim}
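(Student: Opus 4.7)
The plan is to verify the two defining properties of a Gaussian process with the claimed covariance function, namely (i) that every finite-dimensional marginal $(f(\bv x_1),\ldots,f(\bv x_n))$ is jointly Gaussian with mean zero, and (ii) that its covariance matrix has $(i,j)$-entry equal to $\tilde k(\bv x_i,\bv x_j) = \bv w_{\bv x_i}^T K_G \bv w_{\bv x_j}$. Since a GSGP produces each $f(\bv x)$ as a \emph{fixed linear functional} of the single Gaussian vector $\bs\theta \sim \mathcal N(0, K_G)$, both properties should follow almost immediately from the standard fact that affine images of a Gaussian vector are Gaussian.

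Concretely, I would fix an arbitrary finite set $\bv x_1,\ldots,\bv x_n \in \R^d$, stack the interpolation weight vectors into a matrix $W \in \R^{n\times m}$ with $i$th row $\bv w_{\bv x_i}^T$, and write the vector of function values as $\bv f = W \bs\theta$. Because $\bs\theta \sim \mathcal N(0, K_G)$, this single matrix multiplication gives $\bv f \sim \mathcal N(0, W K_G W^T)$ by the linear transformation property of Gaussians. The mean is trivially zero (since $\bs\theta$ has mean zero and $\bv w_{\bv x}$ is deterministic), and the $(i,j)$-entry of $W K_G W^T$ unpacks to $\bv w_{\bv x_i}^T K_G \bv w_{\bv x_j}$, which by definition is $\tilde k(\bv x_i, \bv x_j)$. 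This simultaneously establishes that the finite-dimensional marginals are Gaussian and identifies the covariance function.

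I do not anticipate any real obstacle: the claim is essentially a restatement of how Bayesian linear regression induces a Gaussian prior on the function space, and the only ingredient beyond definitions is the affine-Gaussian closure property. One minor sanity check worth including is that $\tilde k$ is a valid (positive semidefinite) kernel, but this is automatic because $W K_G W^T$ is PSD for every choice of points (as a congruence of the PSD matrix $K_G$), so the finite-dimensional covariance matrices produced by $\tilde k$ are always PSD and the Kolmogorov consistency conditions for a GP are satisfied. I would state this briefly and conclude.
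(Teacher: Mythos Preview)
Your proposal is correct and is essentially the same argument as the paper's: the paper also observes that each $f(\bv x_i)=\bv w_{\bv x_i}^T\bs\theta$ is a linear functional of the single Gaussian vector $\bs\theta\sim\mathcal N(0,K_G)$, so the joint is zero-mean Gaussian, and then computes $\cov(f(\bv x),f(\bv x'))=\bv w_{\bv x}^T K_G \bv w_{\bv x'}$. The only cosmetic difference is that you stack the weights into $W$ and read off the covariance matrix $WK_GW^T$ in one step, whereas the paper writes out the pairwise covariance; the content is identical.
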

In other words: the \emph{exact} covariance function of the GSGP is the same as the SKI approximation in Eq.~\eqref{eq:ski}.
Now, suppose noisy observations $\bv y \sim \mathcal{N}(\bv f_X, \sigma^2)$ are made of the GSGP at input locations $X$. It is well known that the posterior distribution of $f$ is a Gaussian process, with mean, covariance and log likelihood given in Fact~\ref{fact:gsgp}~\cite{bayesianRegression,bishop2006pattern}. From Claim~\ref{claim:gsgpski} it follows that:
\begin{theorem}[Equivalence of GSGP Inference and SKI Approximation]\label{thm:gsgp}
The inference expressions of Fact \ref{fact:gsgp} are identical to the SKI approximations of Def. \ref{def:kissgp}.
\end{theorem}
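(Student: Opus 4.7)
The plan is to derive the theorem as a direct consequence of Claim \ref{claim:gsgpski}, which already establishes that the GSGP is a GP whose covariance function equals the SKI approximation $\tilde k(\bv x, \bv x') = \bv w_{\bv x}^T K_G \bv w_{\bv x'}$. Once this is in hand, applying the exact GP inference formulas of Fact \ref{fact:gp_inference} to a GP with kernel $\tilde k$ must yield correct Bayesian posterior quantities, and these quantities should coincide term-by-term with the expressions in Def. \ref{def:kissgp}.

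Concretely, I would proceed in three steps. \textbf{Step 1: substitute $\tilde k$ into Fact \ref{fact:gp_inference}.} The data kernel matrix becomes $[\tilde k(\bv x_i, \bv x_j)]_{i,j} = W K_G W^T = \tilde K_X$, and the cross-covariance vector becomes $[\tilde k(\bv x, \bv x_i)]_i = W K_G \bv w_{\bv x} = \bv{\tilde k}_{\bv x}$. Plugging these into the exact GP posterior mean, covariance, and log-likelihood formulas yields literally the expressions in Def. \ref{def:kissgp}, including $\bv{\tilde z} = (\tilde K_X + \sigma^2 I)^{-1} \bv y$. \textbf{Step 2: express Fact \ref{fact:gsgp} in weight-space form.} The fact computes the same posterior quantities in the Bayesian linear regression (weight-space) view, where the posterior on $\boldsymbol{\theta}$ has covariance $(K_G^{-1} + W^T W/\sigma^2)^{-1}$ and mean $(K_G^{-1} + W^T W/\sigma^2)^{-1} W^T \bv y / \sigma^2$, and the predictive mean at $\bv x$ is $\bv w_{\bv x}^T \boldsymbol{\mu}_\theta$ and predictive variance $\bv w_{\bv x}^T \Sigma_\theta \bv w_{\bv x}$. \textbf{Step 3: match the two formulations.} The equivalence between the weight-space formulas of Fact \ref{fact:gsgp} and the kernel-space formulas of Def. \ref{def:kissgp} follows from the Woodbury matrix identity applied to $(K_G^{-1} + W^T W / \sigma^2)^{-1}$, and from the matrix determinant lemma to relate $\log\det(\tilde K_X + \sigma^2 I)$ to the weight-space log-determinant. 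Both are classical identities in the GP / Bayesian linear regression correspondence \cite{bishop2006pattern,rasmussen2004gaussian}.

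The main obstacle is bookkeeping: one has to carefully track the $\sigma^2$ factors and the swap between $K_G$ and $K_G^{-1}$ when invoking Woodbury in the form $(K_G^{-1} + W^T W / \sigma^2)^{-1} = K_G - K_G W^T (\sigma^2 I + W K_G W^T)^{-1} W K_G$, and verify that the posterior mean $\bv w_{\bv x}^T \boldsymbol{\mu}_\theta$ telescopes to $\bv w_{\bv x}^T K_G W^T (\tilde K_X + \sigma^2 I)^{-1} \bv y$. A similar check is needed for the posterior variance, where one subtracts $\bv{\tilde k}_{\bv x}^T(\tilde K_X + \sigma^2 I)^{-1}\bv{\tilde k}_{\bv x}$ from $\bv w_{\bv x}^T K_G \bv w_{\bv x}$. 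For the log-likelihood, the matrix determinant lemma gives $\log\det(\tilde K_X + \sigma^2 I) = n\log\sigma^2 + \log\det(I + W^T W K_G / \sigma^2)$, matching the weight-space expression up to constants.

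None of these steps is deep; the content of the theorem is essentially contained in Claim \ref{claim:gsgpski}, and the rest is a standard primal-dual reformulation. The value of the theorem is conceptual: it certifies that SKI inference can be executed as exact weight-space inference in a linear model with $m$ basis functions, opening the door to the $\mathcal{O}(m\log m)$ per-iteration algorithms developed in the subsequent sections.
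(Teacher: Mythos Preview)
Your proposal is correct and takes essentially the same approach as the paper: the theorem is derived directly from Claim~\ref{claim:gsgpski} (since Fact~\ref{fact:gsgp} and Def.~\ref{def:kissgp} both compute the exact posterior of the same GP), with an optional direct linear-algebraic verification. The paper's appendix carries out that verification using the push-through identity $A(BA+\sigma^2 I)^{-1}=(AB+\sigma^2 I)^{-1}A$ and a Weinstein--Aronszajn block-determinant argument in place of your Woodbury and matrix-determinant-lemma manipulations, but these are interchangeable standard tools.
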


\begin{figure}[h]
\begin{mdframed}[backgroundcolor=light-gray] 
\begin{fact}[\textbf{GSGP Inference}]\label{fact:gsgp} The posterior mean, covariance, and log likelihood functions for the grid-structured Gaussian process (Def. \ref{def:gsgp}) are given by:
\begin{align*}
&\textbf{mean: } \mu_{f|\mathcal{D}}(\mathbf{x}) = \bv w_{\bv{x}}^T  \bv{\bar z} \\
&\textbf{covariance: } k_{f|\mathcal{D}}(\mathbf{x}, \mathbf{x}^{\prime}) =  \bv{w}_\bv{x}^T \bv{\bar{C}} \bv{w}_\bv{x'}\nonumber\\
&\textbf{log likelihood: } \log \Pr(\bv{y}) =  -\frac{1}{2} \big [  \log \det(K_{G} W^T W + \sigma^2 I) + \frac{\bv{y}^T (\bv{y} - W\bv{\bar z})}{\sigma^2} + c\big ]\nonumber, 
\end{align*}
where $\bv{\bar z} = \E[\boldsymbol{\theta} | \bv y] = (K_{G} W^T W + \sigma^2 I)^{-1} K_{G} W^{T} \bv y$ is the posterior mean of $\boldsymbol{\theta}$, ${\bv{\bar C}} = \var(\boldsymbol{\theta} | \bv y) = \sigma^2 \left( K_G W^T W   + \sigma^2 I \right)^{-1}  K_{G}$ is the posterior variance of $\boldsymbol{\theta}$ and $c = n \log (2\pi) +(n-m)\log \sigma^2$.
\end{fact}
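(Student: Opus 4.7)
The plan is to verify, expression by expression, that the three GSGP inference formulas in Fact~\ref{fact:gsgp} coincide with the corresponding SKI formulas in Def.~\ref{def:kissgp}. Since every formula is already sandwiched by the same interpolation vectors $\bv{w}_\bv{x}$ (or constants) on the outside, the problem reduces to three purely matrix-algebraic identities relating the $m \times m$ matrix $K_G W^T W + \sigma^2 I$ (appearing in GSGP) and the $n \times n$ matrix $W K_G W^T + \sigma^2 I = \tilde K_X + \sigma^2 I$ (appearing in SKI). This is just the classical primal/dual equivalence for Bayesian linear regression in disguise.

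For the \textbf{mean}, I would first prove the push-through identity
\[
(K_G W^T W + \sigma^2 I)^{-1} K_G W^T \;=\; K_G W^T (W K_G W^T + \sigma^2 I)^{-1},
\]
which is immediate from the trivial equality $(K_G W^T W + \sigma^2 I)\, K_G W^T = K_G W^T\, (W K_G W^T + \sigma^2 I)$ followed by left-multiplying by $(K_G W^T W + \sigma^2 I)^{-1}$ and right-multiplying by $(W K_G W^T + \sigma^2 I)^{-1}$. Applied to $\bv{\bar z}$ this yields $\bv{w}_\bv{x}^T \bv{\bar z} = \bv{w}_\bv{x}^T K_G W^T \bv{\tilde z}$, matching the SKI posterior mean.

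For the \textbf{covariance}, I would invoke the Woodbury identity with $A = \sigma^2 I_m$, $U = K_G W^T$, $C = I_n$, $V = W$ to obtain
\[
\sigma^2 (K_G W^T W + \sigma^2 I)^{-1} \;=\; I - K_G W^T (W K_G W^T + \sigma^2 I)^{-1} W.
\]
Right-multiplying by $K_G$ gives $\bv{\bar C} = K_G - K_G W^T (W K_G W^T + \sigma^2 I)^{-1} W K_G$, and sandwiching by $\bv{w}_\bv{x}^T$ and $\bv{w}_{\bv{x}'}$ reproduces the SKI posterior covariance once one recognizes $\bv{\tilde k}_\bv{x} = W K_G \bv{w}_\bv{x}$ and $\tilde k(\bv{x}, \bv{x}') = \bv{w}_\bv{x}^T K_G \bv{w}_{\bv{x}'}$.

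For the \textbf{log-likelihood} I would handle the log-determinant and the quadratic form separately. For the determinant, Sylvester's identity gives $\det(W K_G W^T + \sigma^2 I_n) = \sigma^{2(n-m)} \det(K_G W^T W + \sigma^2 I_m)$, and taking logarithms absorbs exactly the $(n-m)\log\sigma^2$ offset that appears inside the constant $c$. For the quadratic form, set $S = W K_G W^T + \sigma^2 I$ and apply the push-through identity from the mean step: $W \bv{\bar z} = W K_G W^T S^{-1} \bv y = (S - \sigma^2 I) S^{-1} \bv y = \bv y - \sigma^2 S^{-1} \bv y$, so that $\sigma^{-2} \bv y^T (\bv y - W \bv{\bar z}) = \bv y^T S^{-1} \bv y = \bv y^T \bv{\tilde z}$. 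Combining the three pieces matches the SKI log-likelihood exactly.

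The proof amounts to standard linear-algebraic bookkeeping once Claim~\ref{claim:gsgpski} is available, so there is no real obstacle. The only mildly delicate point is that the Woodbury and push-through manipulations I use are valid without assuming $K_G$ is invertible, since the inverted matrices are always of the form $\sigma^2 I + (\text{PSD})$ with $\sigma^2 > 0$. The apparent dimension mismatch between the $m$-parameter GSGP view and the $n$-response SKI view is absorbed cleanly into the $(n-m)\log\sigma^2$ term of $c$.
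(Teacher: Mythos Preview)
Your proposal is correct but takes a different route from the paper. The paper proves Fact~\ref{fact:gsgp} \emph{directly} from the standard conjugate-update formulas for Bayesian linear regression: it starts from $\E[\boldsymbol{\theta}\mid\bv y] = \sigma^{-2}(K_G^{-1} + \sigma^{-2}W^TW)^{-1}W^T\bv y$ and the analogous variance and marginal-likelihood expressions, then algebraically massages them into the stated forms. Your approach instead uses Claim~\ref{claim:gsgpski} together with Fact~\ref{fact:gp_inference} to conclude that the \emph{correct} GSGP posterior quantities are exactly the SKI formulas of Def.~\ref{def:kissgp}, and then proves the purely algebraic equivalence between those and the Fact~\ref{fact:gsgp} expressions via push-through, Woodbury, and Sylvester. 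That algebraic equivalence is precisely what the paper later gives as its alternative linear-algebraic proof of Theorem~\ref{thm:gsgp}; you have essentially folded the proofs of Fact~\ref{fact:gsgp} and Theorem~\ref{thm:gsgp} into one. The paper's route makes the Bayesian-regression interpretation self-contained (at the price of writing $K_G^{-1}$ in intermediate steps), whereas your route avoids inverting $K_G$ but leans on Claim~\ref{claim:gsgpski} and Fact~\ref{fact:gp_inference} as black boxes. One small imprecision in your final remark: $K_G W^T W$ is not symmetric, so ``$\sigma^2 I + (\text{PSD})$'' is not literally correct for it; invertibility follows instead from similarity to the PSD matrix $K_G^{1/2} W^T W K_G^{1/2}$, exactly as the paper argues.
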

\end{mdframed}
\end{figure}

\subsection{GSGP Running Time and Memory}\label{sec:gsgpIter}
 
By Theorem \ref{thm:gsgp}, we can apply the SKI approximation using the GSGP inference equations of Fact \ref{fact:gsgp};
these also involve structured matrices that are well suited to iterative methods.
In particular, they require linear solves and logdet computation for the $m \times m$ matrix $K_GW^T W + \sigma^2 I$ rather than the $n \times n$ matrix $W K_G W^T + \sigma^2 I$. Under the standard SKI assumptions, this leads to $\mathcal{O}(m \log m)$ per-iteration run time and $\O(m)$ memory footprint with $\mathcal{O}(n)$ precomputation. 
 
\textbf{Precomputation:} GSGP involves precomputing $W^T W \in \R^{m \times m}$, $W^T \bv y \in \R^m$, and $\bv y^T \bv y \in \R$, which are the sufficient statistics of a linear regression problem with feature matrix $W$. Each is a sum over $n$ data points and has fixed size depending only on $m$. Once computed, each expression in Fact~\ref{fact:gsgp} can be computed without referring back to the data $W$ and $\bv y$. 

It is clear that $\bv y^T \bv y = \sum_{i=1}^n y_i^2$ can be computed in $\mathcal{O}(n)$ time with a single pass over the data.
Assume the interpolation strategy is local (e.g., linear or cubic interpolation), so that each $\bv{w}_{\bv{x}_i}$ has $\mathcal{O}(1)$ non-zeros.
Then $W^T \bv y = \sum_{i=1}^n \bv{w}_{\bv{x}_i}^T \bv y$ can also be computed in $\mathcal{O}(n)$ time with one pass over the data, since each inner product accesses only a constant number of entries of $\bv y$. $W^T W$ also has desirable computational properties:
\begin{claim}
  Assume that $G = \{\bv{g}_1,\ldots,\bv{g}_m\}$ has spacing $s$, i.e., $\norm{\bv{g}_i-\bv{g}_j}_\infty \ge s$ for any $i,j \in m$ and that $\w^j_{\x}$ is non-zero only if $\|\g_j - \x\|_\infty < r \cdot s$ for some fixed integer $r$. Then $W^T W$ can be computed in $\mathcal{O}(n(2r)^{2d})$ time and has at most $(4r-1)^{d}$ entries per row. Therefore $mv(W^TW) = \mathcal{O}(m (4r-1)^d).$
  \label{claim:WTW}
\end{claim}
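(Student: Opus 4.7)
The plan is to express $W^T W$ as the outer-product sum $\sum_{i=1}^n \bv{w}_{\bv{x}_i} \bv{w}_{\bv{x}_i}^T$ and exploit the sparsity of each $\bv{w}_{\bv{x}_i}$. First I would bound $\nnz(\bv{w}_{\bv{x}_i})$ using the locality hypothesis: an entry $w^j_{\bv{x}_i}$ is non-zero only when $\|\bv{g}_j - \bv{x}_i\|_\infty < rs$, and since grid points are separated by at least $s$ in $\ell_\infty$, at most $2r$ grid points can lie along each coordinate axis within any such range. This yields $\nnz(\bv{w}_{\bv{x}_i}) \le (2r)^d$. Consequently each outer product $\bv{w}_{\bv{x}_i} \bv{w}_{\bv{x}_i}^T$ has at most $(2r)^{2d}$ non-zero entries and can be accumulated into a sparse representation of $W^T W$ (keyed by $(j,k)$ pairs) in $\O((2r)^{2d})$ work per data point, giving the claimed $\O(n(2r)^{2d})$ precomputation time.

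Next I would bound the number of non-zeros per row of $W^T W$. Since $(W^T W)_{jk} = \sum_i w^j_{\bv{x}_i} w^k_{\bv{x}_i}$, this entry can be non-zero only if some $\bv{x}_i$ satisfies $w^j_{\bv{x}_i} \ne 0$ and $w^k_{\bv{x}_i} \ne 0$ simultaneously. The locality hypothesis then forces $\|\bv{g}_j - \bv{x}_i\|_\infty < rs$ and $\|\bv{g}_k - \bv{x}_i\|_\infty < rs$, so by the $\ell_\infty$ triangle inequality $\|\bv{g}_j - \bv{g}_k\|_\infty < 2rs$. For a regularly spaced axis-aligned grid with coordinate spacing $s$, the number of grid points within $\ell_\infty$ distance strictly less than $2rs$ of a fixed $\bv{g}_j$ is at most $(4r-1)^d$, since along each axis the admissible offsets form the set $\{-(2r-1)s,\ldots,(2r-1)s\}$ of size $4r-1$. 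This gives the per-row bound, and summing over rows produces a total of at most $m(4r-1)^d$ non-zeros, so standard sparse MVM immediately gives $mv(W^T W) = \O(m(4r-1)^d)$.

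The main subtlety is the per-row counting argument: the sharp bound $(4r-1)^d$ leans on the grid being axis-aligned and equally spaced, which is the standard SKI setup implicit in the claim. For an arbitrary point set satisfying only the $\ell_\infty$-separation $\|\bv{g}_i - \bv{g}_j\|_\infty \ge s$, a generic volume-packing argument (placing disjoint $\ell_\infty$ balls of radius $s/2$ around each grid point, all contained in a ball of radius $2rs + s/2$ around $\bv{g}_j$) would yield only the weaker bound $(4r+1)^d$. Invoking the full regular-grid structure closes this gap. A small additional bookkeeping point is checking that the sparse accumulation during precomputation can be done in $\O(1)$ per update (e.g., by indexing entries of $W^T W$ via the multi-index offsets on the grid) so that no extra logarithmic factors creep in.
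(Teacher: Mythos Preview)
Your proposal is correct and follows essentially the same approach as the paper: the outer-product decomposition $W^TW = \sum_i \bv{w}_{\bv{x}_i}\bv{w}_{\bv{x}_i}^T$, the $(2r)^d$ bound on $\nnz(\bv{w}_{\bv{x}_i})$, and the triangle-inequality argument giving $\|\bv{g}_j-\bv{g}_k\|_\infty < 2rs$ with the per-axis count of $4r-1$ are exactly what the paper does. Your observation that the sharp $(4r-1)^d$ bound relies on the regular axis-aligned grid (not merely the stated $\ell_\infty$-separation) is a valid subtlety that the paper glosses over, and your packing alternative yielding $(4r+1)^d$ is a nice addition.
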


For example, $r=1$ for linear interpolation and $r=2$ for cubic interpolation. The upshot is that $W^TW$ can be precomputed in $\mathcal{O}(n)$ time, after which matrix-vector multiplications take $\mathcal{O}(m)$ time, with dependence on $r$ and $d$ similar to that of $mv(W) = \mathcal{O}(n(2r)^d)$.


 \textbf{Per-Iteration and Memory:}
 As discussed in Section \ref{sec:skiRuntime}, due to its grid structure, $K_{G}$ admits fast matrix-vector multiplication: $mv(K_G) = \mathcal O(m\log m)$ for stationary kernels. Since $W^T W$ is sparse, $mv(W^T W) = \mathcal O(m)$. Overall, $mv(K_{G} W^T W + \sigma^2 I) = \mathcal O(m \log m)$, giving per-iteration runtime of $\O(m \log m)$
 for computing the approximate mean, covariance, and log-likelihood in Fact \ref{fact:gsgp} via iterative methods. Importantly, this complexity is \emph{independent} of the number of data points $n$. GSGP uses $\nnz(W^TW) + m + m = \O(m)$ memory to store $W^TW$, $W^T \bv y$ and $K_G$.
 
\paragraph{Limitations of GSGP.}
Directly replacing the classic SKI method with the GSGP inference equations of Fact \ref{fact:gsgp} reduces per-iteration cost but has some undesirable trade-offs. In particular, unlike $W K_G W^T + \sigma^2 I$, the matrix $K_G W^T W + \sigma^2 I$ is \emph{asymmetric}. Thus, conjugate gradient and Lanczos---which are designed for \emph{symmetric} positive semidefinite matrices---are not applicable. Asymmetric solvers like GMRES~\cite{saad1986gmres} can be used, and seem to work well in practice for posterior mean estimation, but do not enjoy the same theoretical convergence guarantees, nor do they as readily provide the approximate tridiagonalization for low-rank approximation for predictive covariance~\cite{ConstantTimePD} or log-likelihood estimation~\cite{ScalableLD}. It is possible to algebraically manipulate the GSGP expressions to yield \emph{symmetric} $m \times m$ systems, but these lose the desirable `regularized form' $A + \sigma^2 I$ and have worse conditioning, leading to more iterations being required in practice (see Appendix \ref{app:gsgp}).




\section{Efficient SKI via Factorized Iterative Methods}
\label{sec:factorized}

In this section we show how to achieve the best of both SKI and 
the GSGP reformulation: we design `factorized' versions of the CG and Lanczos methods used in SKI with just $\mathcal{O}(m \log m)$ per-iteration complexity. These methods are mathematically equivalent to the full methods, and so enjoy identical convergence rates, avoiding the complications of the asymmetric solves required by GSGP inference.


\subsection{The Factorized Approach}

Our approach centers on a simple observation relating matrix-vector multiplication with the SKI kernel approximation $\tilde K_X = W K_G W^T + \sigma^{2} I $ and the GSGP operator $K_G W^T W + \sigma^2 I$. To apply the SKI equations of Def. \ref{def:kissgp} via an iterative method, a key step at each iteration is to multiply some iterate $\bv{z}_i \in \R^n$ by $\tilde K_X$, requiring $\mathcal O(n + m \log m)$ time. We avoid this by maintaining a compressed representation of any iterate as $\bv{z}_i = W \bv{\hat z}_i + c_i \bv{z}_0$, where $\bv{\hat z}_i \in \R^m$, $c_i \in \R$ is a scalar coefficient, and  $\bv{z}_0$ is an initial value. At initialization, $\bv{\hat z}_0 = \bv{0}$ and $c_0 = 1$. Critically, this compressed representation can be updated with multiplication by $\tilde K_X$ in just $\mathcal O(m\log m)$ time using the following claim:
\begin{claim}[Factorized Matrix-Vector Multiplication]\label{clm:goldenRule}
For any $\bv{z}_i \in \R^n$ with $\bv{z}_i = W \bv{\hat z}_i + c_i \bv{z}_0$, 
\begin{align*}
(WK_GW^T + \sigma^2 I) \bv{z}_i = W \bv{\hat z}_{i+1} + c_{i+1} \bv{z}_0,
\end{align*}
where $\bv{\hat z}_{i+1} = (K_G W^T W + \sigma^2 I) \bv{\hat z}_{i} + c_i K_G W^T \bv{z}_0$ and $c_{i+1} = \sigma^2 \cdot c_i$.
Call this operation a \emph{factorized update} and denote it as $(\bv{\hat z}_{i+1}, c_{i+1}) = \mathcal{A}(\bv{\hat z}_{i},c_i)$.
If the vector $K_G W^T \bv{z}_0$ is precomputed in $\mathcal{O}(n + m \log m)$ time, each subsequent factorized update takes $\mathcal O(m \log m)$ time.
\end{claim}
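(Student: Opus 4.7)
The plan is to prove the claim in two parts: first, verify the algebraic identity by direct expansion, and second, verify the runtime bounds using the structural properties of $W$ and $K_G$ already established in the paper.

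For the algebraic part, I would substitute the compressed form $\bv{z}_i = W\bv{\hat z}_i + c_i\bv{z}_0$ directly into $(WK_GW^T + \sigma^2 I)\bv{z}_i$ and distribute. This yields four terms: $WK_GW^TW\bv{\hat z}_i$, $c_i WK_GW^T\bv{z}_0$, $\sigma^2 W\bv{\hat z}_i$, and $\sigma^2 c_i\bv{z}_0$. The first three share a leading factor of $W$ and can be collected as
$$W\left[(K_GW^TW + \sigma^2 I)\bv{\hat z}_i + c_i K_GW^T\bv{z}_0\right] = W\bv{\hat z}_{i+1},$$
matching the definition of $\bv{\hat z}_{i+1}$ in the claim. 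The remaining term $\sigma^2 c_i\bv{z}_0$ equals $c_{i+1}\bv{z}_0$ by the stated update $c_{i+1} = \sigma^2 c_i$. This establishes the identity.

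For the runtime analysis, the precomputation of $K_GW^T\bv{z}_0$ decomposes into two steps: $W^T\bv{z}_0$ costs $\mathcal{O}(n)$ since $W$ has $\mathcal{O}(1)$ nonzeros per row and hence $\mathtt{nnz}(W) = \mathcal{O}(n)$, and multiplying the resulting $m$-vector by $K_G$ costs $\mathcal{O}(m\log m)$ using the BTTB/FFT structure recalled in Section~\ref{sec:skiRuntime}. For a subsequent factorized update, one computes $W^TW\bv{\hat z}_i$ in $\mathcal{O}(m)$ time by Claim~\ref{claim:WTW}, applies $K_G$ in $\mathcal{O}(m\log m)$ time, and then performs a constant number of $\mathcal{O}(m)$ vector scalings/additions involving the precomputed vector $K_GW^T\bv{z}_0$ and $\sigma^2\bv{\hat z}_i$. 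The scalar update $c_{i+1} = \sigma^2 c_i$ is $\mathcal{O}(1)$. Crucially, no operation after precomputation touches an $n$-dimensional vector.

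Since the claim is essentially a bookkeeping identity, there is no real conceptual obstacle. The closest thing to a subtlety is verifying that the compressed representation $W\bv{\hat z}_i + c_i\bv{z}_0$ is closed under left-multiplication by $\tilde K_X$, which is exactly what the algebraic step above shows, and noting that the coefficient on $\bv{z}_0$ evolves purely by scalar multiplication — this is what prevents the $\bv{z}_0$ component from being converted into a $W$-component and inflating the per-iteration cost.
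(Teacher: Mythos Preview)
Your proposal is correct and follows essentially the same approach as the paper's proof: direct expansion of $(WK_GW^T + \sigma^2 I)(W\bv{\hat z}_i + c_i\bv{z}_0)$, collecting the three terms with a leading $W$, and then reading off the runtime from the structural facts about $W$, $W^TW$, and $K_G$ established earlier. The only cosmetic difference is that the paper groups the expansion into two summands before distributing, whereas you go straight to the four terms.
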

For algorithms such as CG we also need to support additions and inner products in the compressed representation. Additions are simple via linearity. Inner products can be computed efficiently as well:
\begin{claim}[Factorized Inner Products]\label{clm:goldenRule2}
For any $\bv{z}_i,\bv{y}_i \in \R^n$ with $\bv{z}_i = W \bv{\hat z}_i + c_{i} \bv{z}_0$ and $\bv{y}_i = W \bv{\hat y}_i + d_{i} \bv{y}_0$,
\begin{align*}
\bv{z}_i^T \bv{y}_i &= \bv{\hat z}_i^T W^T W \bv{\hat y}_i + d_{i} \bv{\hat z}_i^T W^T  \bv{y}_0 \\
							 &+ c_{i} \bv{\hat y}_i^T W^T  \bv{z}_0 +   c_{i}  d_i  \bv{y}_0^T  \bv{z}_0.
\end{align*}
We denote the above operation by $\langle (\bv{\hat z}_i, c_{i}), (\bv{\hat y}_i, d_{i}) \rangle$. If ${W}^T \bv{z}_0$, ${W}^T \bv{y}_0$, and $\bv{y}_0^T \bv{z}_0$ are precomputed in $\mathcal{O}(n)$ time, then $\langle (\bv{\hat z}_i, c_{i}), (\bv{\hat y}_i, d_{i}) \rangle$ can be computed using just one matrix-vector multiplication with $W^T W$ and $\mathcal{O}(m)$ additional time.
\end{claim}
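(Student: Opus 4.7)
The plan is entirely algebraic: the identity for $\bv{z}_i^T \bv{y}_i$ will follow by substituting the compressed representations into the inner product and expanding via bilinearity, while the runtime bound will follow by inspecting which pieces of each resulting term are precomputed versus computed on the fly.

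First I would substitute $\bv{z}_i = W \bv{\hat z}_i + c_i \bv{z}_0$ and $\bv{y}_i = W \bv{\hat y}_i + d_i \bv{y}_0$ and expand the product into four terms by bilinearity of the standard inner product:
\begin{align*}
\bv{z}_i^T \bv{y}_i &= (W \bv{\hat z}_i + c_i \bv{z}_0)^T (W \bv{\hat y}_i + d_i \bv{y}_0) \\
&= \bv{\hat z}_i^T W^T W \bv{\hat y}_i + d_i \bv{\hat z}_i^T W^T \bv{y}_0 + c_i \bv{z}_0^T W \bv{\hat y}_i + c_i d_i \bv{z}_0^T \bv{y}_0.
\end{align*}
Using $\bv{z}_0^T W \bv{\hat y}_i = \bv{\hat y}_i^T W^T \bv{z}_0$ and $\bv{z}_0^T \bv{y}_0 = \bv{y}_0^T \bv{z}_0$ rewrites the last two terms into the form appearing in the claim, establishing the stated identity.

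Next I would verify the time bound by walking through each of the four terms under the assumption that $W^T \bv{z}_0 \in \R^m$, $W^T \bv{y}_0 \in \R^m$, and $\bv{y}_0^T \bv{z}_0 \in \R$ have already been computed and stored. The first term $\bv{\hat z}_i^T W^T W \bv{\hat y}_i$ requires exactly one matrix-vector multiplication with $W^T W$ (producing a vector in $\R^m$) followed by an $O(m)$-time dot product with $\bv{\hat z}_i$. The second and third terms are each dot products of an $m$-vector with a precomputed $m$-vector, costing $O(m)$ time each. The fourth term is just two scalar multiplications of the precomputed scalar $\bv{y}_0^T \bv{z}_0$ by $c_i$ and $d_i$. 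Summing these gives one MVM with $W^T W$ and $O(m)$ additional arithmetic, as claimed.

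Finally I would justify the $\mathcal{O}(n)$ precomputation claim. Since $W$ has $O(1)$ nonzeros per row under the local-interpolation assumption (Section \ref{sec:skiRuntime}), $\nnz(W) = O(n)$, so forming $W^T \bv{z}_0$ and $W^T \bv{y}_0$ each costs $O(n)$ time. The inner product $\bv{y}_0^T \bv{z}_0$ is a single sum over $n$ coordinates, also $O(n)$. There is no real obstacle here: the statement is purely a bookkeeping observation that the only vector that is naturally $n$-dimensional in the expansion is handled either by the sparsity of $W$ (absorbed into the one-time $O(n)$ cost) or by the precomputed dot products with $\bv{z}_0$ and $\bv{y}_0$.
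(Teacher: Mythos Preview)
Your proposal is correct and follows exactly the same approach as the paper: substitute the compressed representations and expand by bilinearity into four terms. In fact, the paper's own proof only gives the one-line expansion and omits the runtime analysis you spell out, so your version is strictly more detailed.
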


\subsection{Factorized Conjugate Gradient}

We now give an example of this approach by deriving a ``factorized conjugate gradient'' algorithm. Factorized CG has lower per-iteration complexity than standard CG for computing the posterior mean, covariance, and log likelihood in the SKI approximations of Def. \ref{def:kissgp}. In Appendix \ref{app:factorized} we apply the same approach to the Lanczos method, which can be  used in approximating the logdet term in the log likelihood, and for computing a low-rank approximation of the posterior covariance.

Figure \ref{fig:fcg} shows a side by side of the classic CG method and our factorized variant. 
CG maintains three iterates, the current solution estimate $\bv{x}_k$, the residual $\bv{r}_k$, and a search direction $\bv{p}_k$. We maintain each in a compressed form with $\bv{x}_k = W \bv{\hat x}_k + c_{k}^x \bv{r}_0 + \bv{x}_0$, $\bv{r}_k = W \bv{\hat r}_k + c_{k}^r \bv{r}_0$, and $\bv{p}_k = W \bv{\hat p}_k + c_{k}^p \bv{r}_0$. Note that the initialization term $\bv{r}_0$ is shared across all iterates with a different coefficient, and that $\bv{x}_k$ has an additional fixed component $\bv{x}_0$. This is an initial solution guess for the system solve, frequently zero.
With these invariants, simply applying Claims~\ref{clm:goldenRule} and \ref{clm:goldenRule2} gives the factorized algorithm.

\begin{figure}[h]
\begin{minipage}{0.38\textwidth}
\begin{algorithm}[H]
\caption{Conjugate gradient}
\label{alg:cg}
\begin{algorithmic}[1]
\Procedure {CG}{$K_G, W, \bv{b}, \sigma, \mathbf{x}_0, \epsilon$}
\State $\mathbf{r}_{0} = \mathbf{b} -\tilde K \mathbf{x}_{0}$
\State $\mathbf {p}_{0} = \mathbf{r}_{0}$ 
\For{$k = 0$ to $\text{maxiter}$} 
\State $\alpha_{k} = {\frac{\mathbf{r}_{k}^T \mathbf{r}_{k}}{\mathbf {p} _{k}^{T} \tilde K \mathbf{p}_{k}}}$  
\State $\mathbf{x}_{k+1}  = \mathbf{x}_{k} + \alpha_{k} \cdot \mathbf {p}_{k} $ 
\State $\mathbf {r}_{k+1} = \mathbf {r}_{k} - \alpha _{k} \cdot \tilde K \mathbf {p}_{k} $ 
\State  if $\mathbf {r}_{k+1}^{T}\mathbf {r}_{k+1} \leq \epsilon $ exit loop 
\State $\beta_{k} = {\frac{\mathbf {r}_{k+1}^T\mathbf {r}_{k+1}}{\mathbf {r}_{k}^T\mathbf {r}_{k}}}$ 
\State $\mathbf{p}_{k+1} = \mathbf{r}_{k+1} + \beta_{k} \mathbf{p}_{k} $
\EndFor
\State \textbf{return} $\bv{x}_{k+1}$
\EndProcedure
\end{algorithmic}
  \end{algorithm}
\end{minipage}
\hfill
\begin{minipage}{0.58\textwidth}
\begin{algorithm}[H]
\caption{Factorized conjugate gradient (FCG)}
\label{alg:fcg}
\begin{algorithmic}[1]
\Procedure {FCG}{$K_G, W, \bv{b}, \sigma, \mathbf{x}_0, \epsilon$} 
\State $\bv{r_0} = \bv{b} - \tilde K \bv{x}_0,\, \bv{\hat r_0} = \bv{0},\, c_{0}^r = 1$
\State $\mathbf{\hat{p}}_{0} = \bv{0},\, c_{0}^p = 1$, $\, \bv{\hat x}_0 = \bv{0},\, c_{0}^x = 0$
\For{$k = 0$ to $\text{maxiter}$} 
\State $\alpha_{k} = \frac{\langle (\bv{\hat r}_k,c_{k}^r), (\bv{\hat r}_k,c_{k}^r)\rangle}{\langle (\bv{\hat{p}}_k,c_{k}^p), \mathcal{A}(\bv{\hat{p}}_k,c_{k}^p)\rangle } $ 
\State $(\mathbf{\hat x}_{k+1}, c_{k+1}^x) = (\mathbf{\hat x}_{k}, c_{k}^x)+  \alpha_{k} \cdot ( \mathbf{\hat{p}}_k, c_{k}^p)$ 
\State $(\mathbf{\hat{r}}_{k+1},c_{k+1}^r) = (\mathbf{\hat{r}}_{k},c_{k}^r)-  \alpha _{k} \cdot \mathcal{A}( \mathbf{\hat{p}}_k , c_{k}^p)$ 
\State  if $\langle (\bv{\hat r}_{k+1},c_{k+1}^r), (\bv{\hat r}_{k+1},c_{k+1}^r)\rangle \leq \epsilon $ exit loop 
\State $\beta_{k}  =  \frac{\langle (\bv{\hat r}_{k+1},c_{k+1}^r), (\bv{\hat r}_{k+1},c_{k+1}^r)\rangle}{\langle (\bv{\hat r}_k,c_{k}^r), (\bv{\hat r}_k,c_{k}^r)\rangle} $ 
\State $(\mathbf{\hat{p}}_{k+1},c_{k+1}^p) = (\mathbf{\hat{r}}_{k+1},c_{k+1}^r) + \beta_{k} \cdot (\mathbf{\hat{p}}_{k},c_{k}^p) $
\EndFor
\State \textbf{return} $\bv{x}_{k+1} = W \bv{ \hat x}_{k+1} + c_{k+1}^x \cdot \bv{r}_0 + \bv{x}_0$
\EndProcedure
\end{algorithmic}
\end{algorithm}
\end{minipage}
\caption{Above $\tilde K = W K_G W^T + \sigma^2 I$. $\mathcal{A}(\cdot)$ and $\langle \cdot,\cdot \rangle$ denote the factorized matrix-vector multiplication and inner product updates of Claims \ref{clm:goldenRule} and \ref{clm:goldenRule2}. The vector $\bv{x}_0 \in \R^n$ is an initial solution, the scalar $\epsilon > 0$ is a tolerance parameter, and $\text{maxiter}$ is the maximum number of iterations. }
\label{fig:fcg}
\end{figure}
\begin{proposition}[Factorized CG Equivalence and Runtime]\label{prop:fcg} 
The outputs of Algs. \ref{alg:cg} and \ref{alg:fcg} on the same inputs are identical. 
Alg. \ref{alg:fcg} performs two matrix-vector multiplications with $K_G$ and three with $W$ initially. In each iteration, it performs a constant number of multiplications with $K_G$ and $W^TW$ plus $\mathcal{O}(m)$ additional work. 
If $W^T W$ is sparse and $K_G$ has multilevel Toeplitz structure, its per iteration runtime is $\mathcal{O}(m \log m)$.
\end{proposition}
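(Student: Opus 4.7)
The plan is to prove the equivalence by induction on the iteration index $k$, maintaining compressed-form invariants for all three CG iterates, and then read off the runtime from the per-iteration operation counts guaranteed by Claims~\ref{clm:goldenRule} and~\ref{clm:goldenRule2}.

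For the equivalence, I would set up the invariants
\[
\mathbf{x}_k = W\hat{\mathbf{x}}_k + c_k^x\,\mathbf{r}_0 + \mathbf{x}_0,\quad \mathbf{r}_k = W\hat{\mathbf{r}}_k + c_k^r\,\mathbf{r}_0,\quad \mathbf{p}_k = W\hat{\mathbf{p}}_k + c_k^p\,\mathbf{r}_0,
\]
where $\mathbf{x}_k,\mathbf{r}_k,\mathbf{p}_k$ are the standard CG iterates from Alg.~\ref{alg:cg} and $(\hat{\mathbf{x}}_k,c_k^x),(\hat{\mathbf{r}}_k,c_k^r),(\hat{\mathbf{p}}_k,c_k^p)$ are the factorized iterates from Alg.~\ref{alg:fcg}. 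The base case $k=0$ is immediate from the listed initializations. For the inductive step, Claim~\ref{clm:goldenRule} applied with the common anchor $\mathbf{z}_0=\mathbf{r}_0$ gives $\tilde{K}\mathbf{p}_k = W\hat{\mathbf{q}}_k + c_k^q\,\mathbf{r}_0$ where $(\hat{\mathbf{q}}_k,c_k^q)=\mathcal{A}(\hat{\mathbf{p}}_k,c_k^p)$, while Claim~\ref{clm:goldenRule2} rewrites $\mathbf{r}_k^T\mathbf{r}_k$ and $\mathbf{p}_k^T\tilde{K}\mathbf{p}_k$ as exactly the factorized inner products that define $\alpha_k$ and $\beta_k$ in Alg.~\ref{alg:fcg}, so the scalars match. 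Substituting into the linear updates for $\mathbf{x}_{k+1},\mathbf{r}_{k+1},\mathbf{p}_{k+1}$ and collecting the $W\hat{\cdot}$ and $\cdot\,\mathbf{r}_0$ components reproduces the factorized updates verbatim, closing the induction. Equivalence of the returned vector follows by expanding the return statement of Alg.~\ref{alg:fcg}.

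For the runtime, I would verify that initialization computes exactly the precomputed quantities that Claims~\ref{clm:goldenRule} and~\ref{clm:goldenRule2} assume (with anchor $\mathbf{r}_0$): forming $\mathbf{r}_0=\mathbf{b}-\tilde{K}\mathbf{x}_0$ via the sequence $W^T\mathbf{x}_0 \to K_G(W^T\mathbf{x}_0) \to W(K_G W^T\mathbf{x}_0)$ uses one MVM each with $W^T$, $K_G$, and $W$; then $W^T\mathbf{r}_0$, $K_G W^T\mathbf{r}_0$, and $\mathbf{r}_0^T\mathbf{r}_0$ add one more MVM with $W^T$ and one with $K_G$, matching the stated two $K_G$ and three $W$-side multiplications. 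Each subsequent iteration performs a single $\mathcal{A}$ call (one $K_G$ and one $W^TW$ multiplication, by Claim~\ref{clm:goldenRule}) together with a constant number of factorized inner products (one $W^TW$ multiplication each, by Claim~\ref{clm:goldenRule2}) plus $O(m)$ scalar work for the vector updates. Under the standing hypotheses — $K_G$ multilevel Toeplitz so $mv(K_G)=O(m\log m)$, and $W^TW$ sparse so $mv(W^TW)=O(m)$ by Claim~\ref{claim:WTW} — this gives $O(m\log m)$ per iteration.

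The only real subtlety is the bookkeeping: choosing $\mathbf{r}_0$ as the single common anchor for all three iterates, so that one set of precomputed quantities feeds both factorized operations across all three update lines. Once that choice is fixed, both the induction and the operation count reduce to direct substitution and the runtime bound follows mechanically from Claims~\ref{clm:goldenRule} and~\ref{clm:goldenRule2}.
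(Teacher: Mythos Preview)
Your proposal is correct and follows essentially the same approach as the paper: the paper establishes exactly the invariants $\mathbf{x}_k = W\hat{\mathbf{x}}_k + c_k^x\mathbf{r}_0 + \mathbf{x}_0$, $\mathbf{r}_k = W\hat{\mathbf{r}}_k + c_k^r\mathbf{r}_0$, $\mathbf{p}_k = W\hat{\mathbf{p}}_k + c_k^p\mathbf{r}_0$ in the discussion preceding the proposition and then states that ``simply applying Claims~\ref{clm:goldenRule} and~\ref{clm:goldenRule2} gives the factorized algorithm,'' without writing out a formal proof. Your inductive argument and operation-count breakdown make this explicit, and your accounting of the initial MVMs (two with $K_G$, three with $W$ or $W^T$) matches the proposition's stated counts exactly.
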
 
Appendix \ref{app:factorized} presents a further optimization to only require one matrix-vector multiplication with $K_G$ and one with $W^T W$ per iteration.
A similar optimization applies to the factorized Lanczos method.

\section{Experiments}
\label{sec:experiments}

We conduct experiments to evaluate our ``GSGP approach'' of using factorized algorithms to solve SKI inference tasks. We use: (1)~factorized CG to solve the linear systems for the SKI posterior mean and covariance expressions of Def.~\ref{def:kissgp} and for approximate tridiagonalization within stochastic log-likelihood approximations~\cite{GPyTorchBM}, and (2)~factorized Lanczos for {low-rank} approximations of the SKI predictive covariance~\cite{ConstantTimePD}.

Our goals are to: (1)~evaluate the running-time improvements of GSGP, (2)~examine new speed-accuracy tradeoffs enabled by GSGP, and (3)~demonstrate the ability to push the frontier of GP inference for massive data sets. 
We use a synthetic data set and three real data sets from prior work~\cite{ScalableLD, kissgp, angell2018inferring}, summarized in Table~\ref{tab:per_iteration_time_memory_requirements}. \change{We focus on large, relatively low-dimensional datasets -- the regime targeted by structured kernel interpolation methods. The Radar dataset is a subset of a larger 120M point dataset. While SKI cannot scale to this data size without significant computational resources, at the end of the section we demonstrate that GSGP's ability to scale to this regime with modest runtime and memory usage.}

\begin{figure}[h]
    \centering
\includegraphics[width=8.5cm]{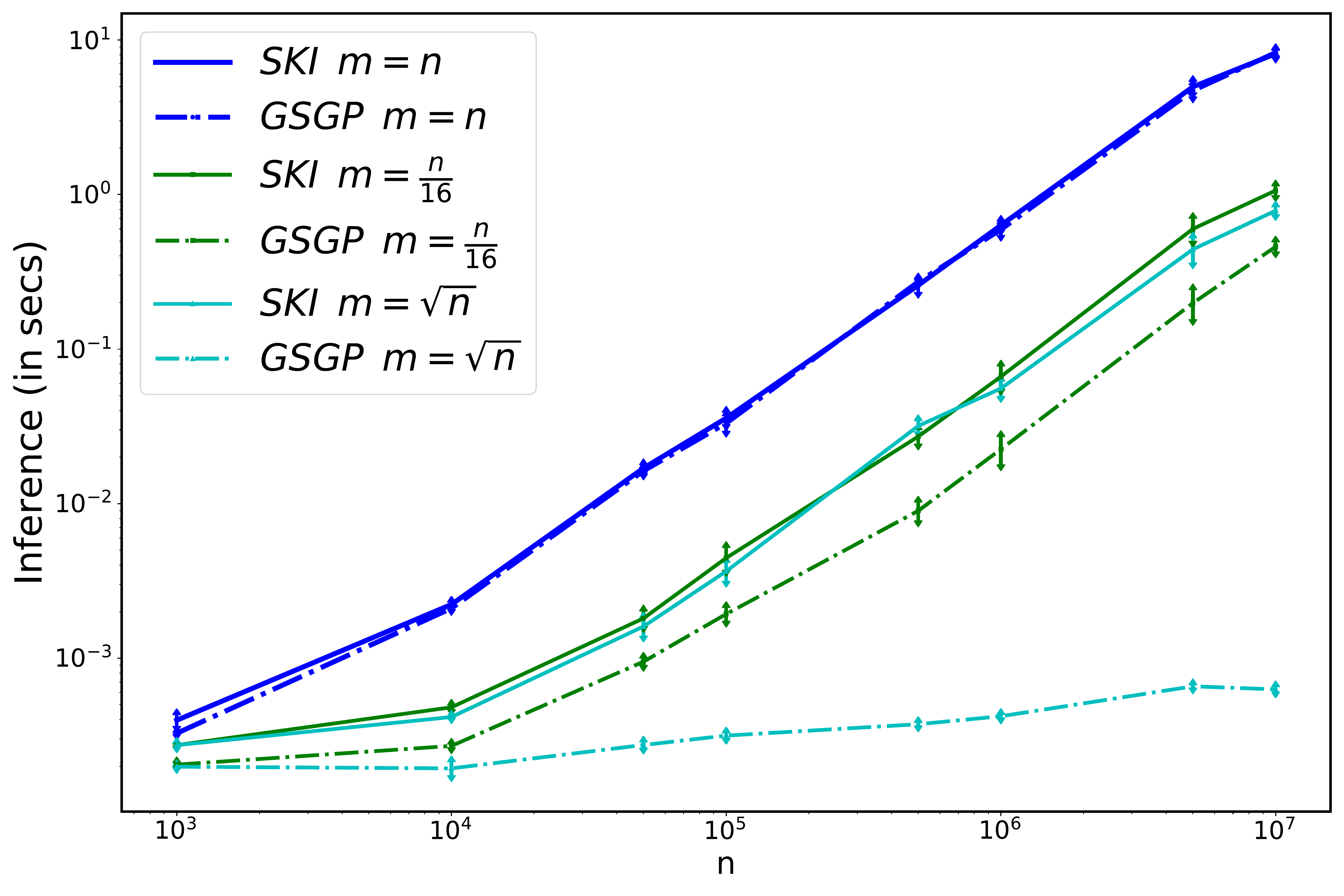}
\caption{\change{Per-iteration time taken by the SKI and GSGP  methods on a synthetic dataset for posterior mean approximation. We see significant speedups when the grid size $m$ is relatively small compared to $n$. Even when $m$ is larger, e.g., $m = \Theta(n)$, GSGP performs no worse than and sometimes improves upon SKI's runtime, e.g., when $m = n/16$.}} 
\label{fig:sine_per_iteration_time_vs_n}
\end{figure}

All linear solves use a tolerance of 0.01, and all kernels are squared exponential. We utilize cubic interpolation for all experiments and provide details on hardware and hyperparameters in Appendix~\ref{app:hyper-parameters and dataset details}. 
In all cases, error bars show 95\% confidence intervals of mean running time over independent trials.

\begin{figure}[h]
\begin{center}
  \setlength\tabcolsep{2pt}
\begin{tabular}[t]{lccccc}
\toprule
Dataset & $n$ & $d$ & $m$ & Time & Memory \\ \hline
\multirow{ 2}{*}{Sound} & 59.3K &  1 & 8K & 0.433 & 0.247    \\  
& 59.3K & 1 &  60K & 0.941 & 0.505  \\ \hline 
\multirow{ 2}{*}{Radar} & 10.5M & 3 & 51.2K & 0.014 & 0.007  \\ 
& 10.5M & 3 & 6.4M & 0.584 & 0.425  \\ \hline 
\multirow{ 4}{*}{Precipitation} & 528K & 3 & 128K & 0.326 & 0.366  \\ 
& 528K & 3 & 528K & 0.491 & 0.628   \\
& 528K & 3 & 1.2M & 0.806 & 2.941  \\ \hline
\end{tabular}

\end{center}
\captionof{table}{\change{Ratios of GSGP to SKI per-iteration time and memory usage for posterior mean approximation for different values of $m$ and $n$. GSGP shows large improvements in a range of settings, even with very large grid size $m$.}}
\label{tab:per_iteration_time_memory_requirements}
\end{figure}

\textbf{Per-iteration resource usage.}
We first compare the per-iteration runtime for posterior mean calculation using CG and Factorized CG on synthetic data of varying sizes. 
The function $f(x)$ is a sine wave with two periods in the interval $[0, 1]$. Random $x$ locations are sampled in the interval and $y = f(x) + \epsilon$ with $\epsilon \sim \mathcal{N}(0, 0.25)$; grid points are equally spaced on $[0, 1]$.
Figure~\ref{fig:sine_per_iteration_time_vs_n} shows the average per-iteration inference time over all iterations of 8 independent trials for increasing $n$ and three different settings of grid size: $m = n$, $m = n/16$ and $m=\sqrt{n}$. 
GSGP is substantially faster when $m < n$ (note log scale) and no slower when $m=n$. 

Memory usage is another important consideration. Table \ref{tab:per_iteration_time_memory_requirements} compares both per-iteration running time and memory usage for posterior mean inference on our real data sets.

Per-iteration time is averaged over one run of CG and FCG for each setting; memory usage is calculated as $\nnz(W) + m + n$ for SKI and $\nnz(W^T W) + 2m$ for GSGP,
where $\nnz(A)$ is the number of nonzero entries of $A$. The gains are significant, especially when $m \ll n$, and gains in time and/or memory are possible even when $m$ equals or exceeds $n$ (e.g., precipitation, $m \in \{528\K, 1.2\M \}$); for very large $m$ it is more resource efficient to run the original algorithm (or use FCG without precomputing $W^T W$).

\begin{figure}[h]
    \begin{center}
    \begin{subfigure}{0.38\textwidth}
    \includegraphics[width=\textwidth]{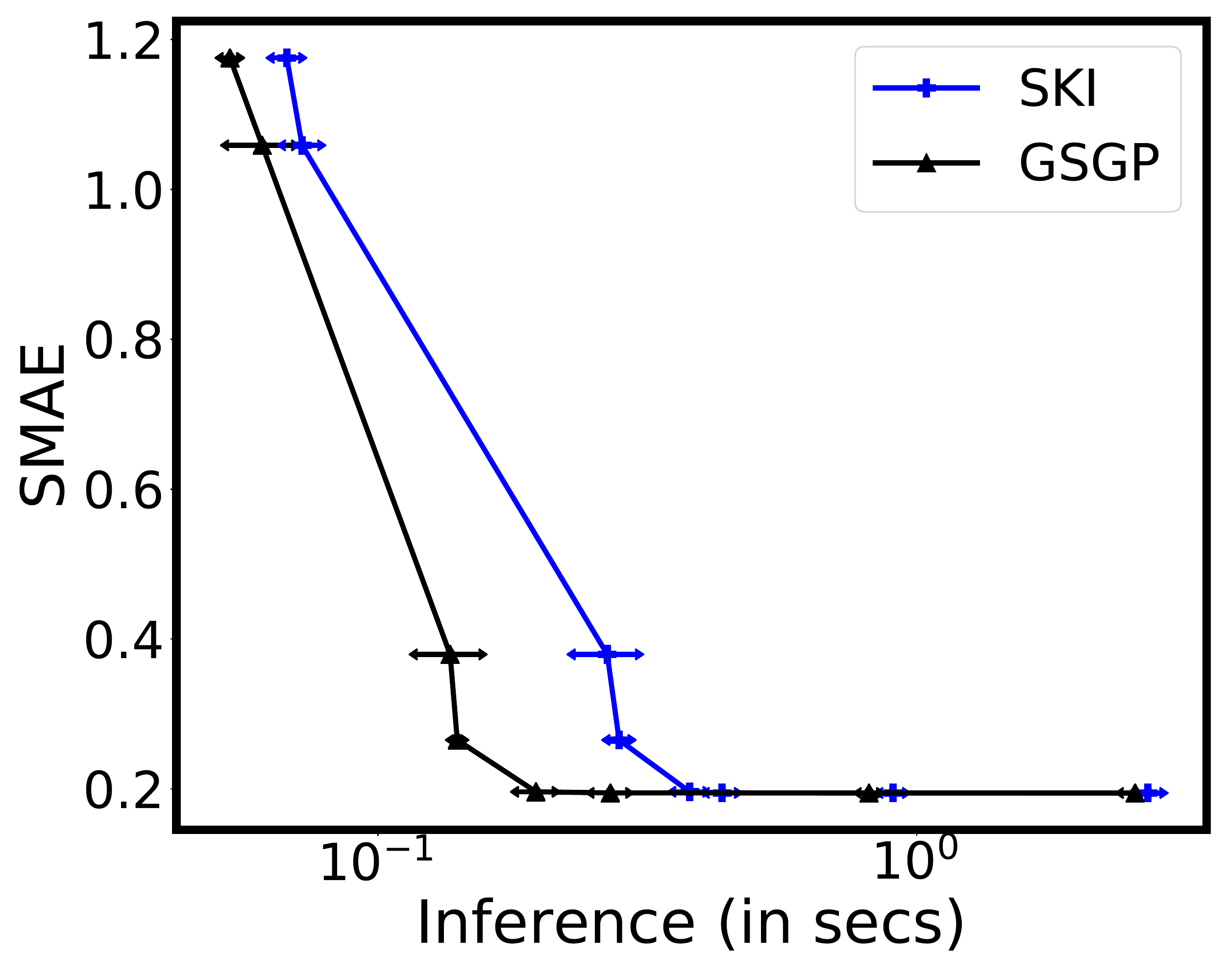}
    \end{subfigure}
    \begin{subfigure}{0.38\textwidth}
    \includegraphics[width=\textwidth]{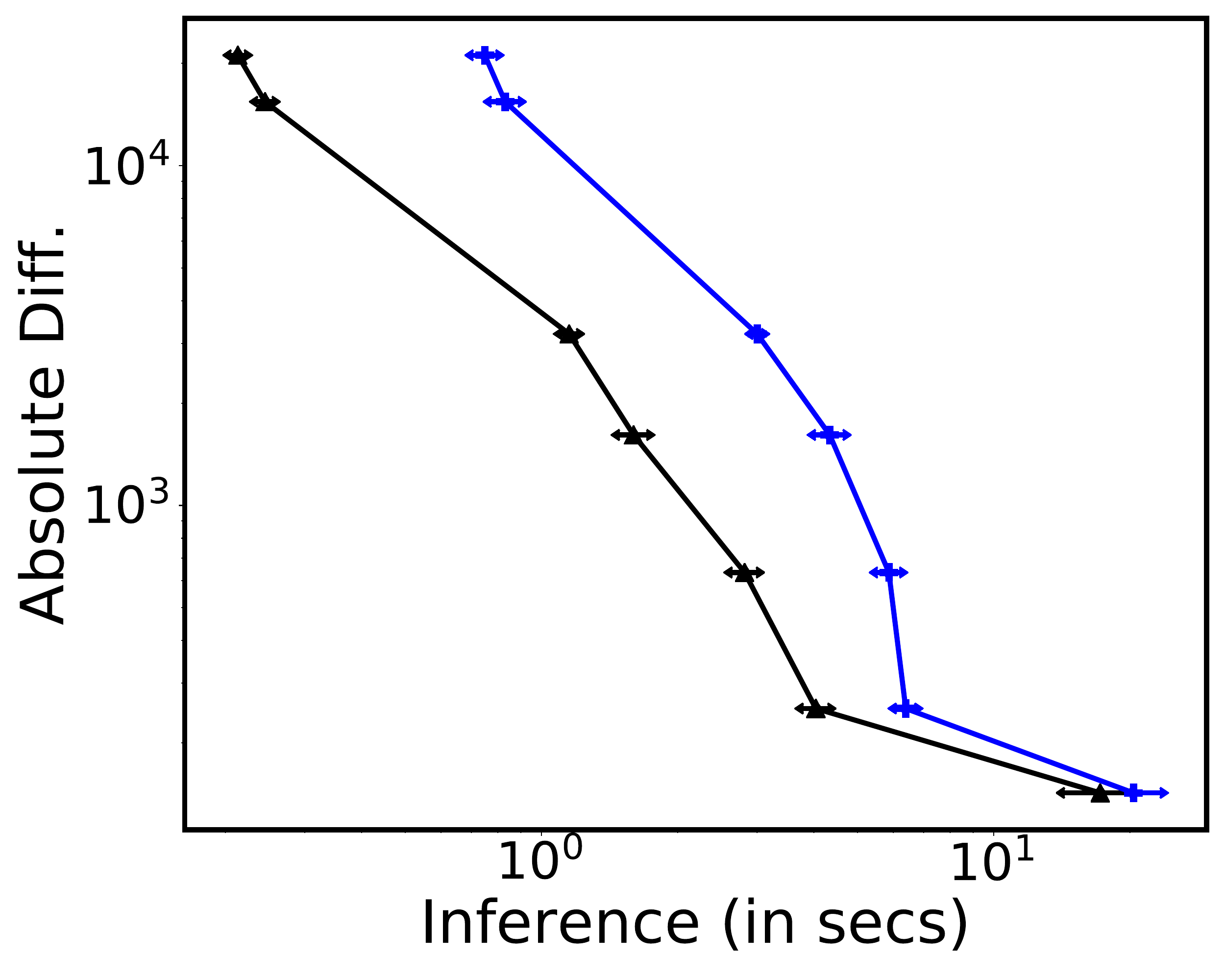}
    \end{subfigure}
    \begin{subfigure}{0.38\textwidth}
    \includegraphics[width=\textwidth]{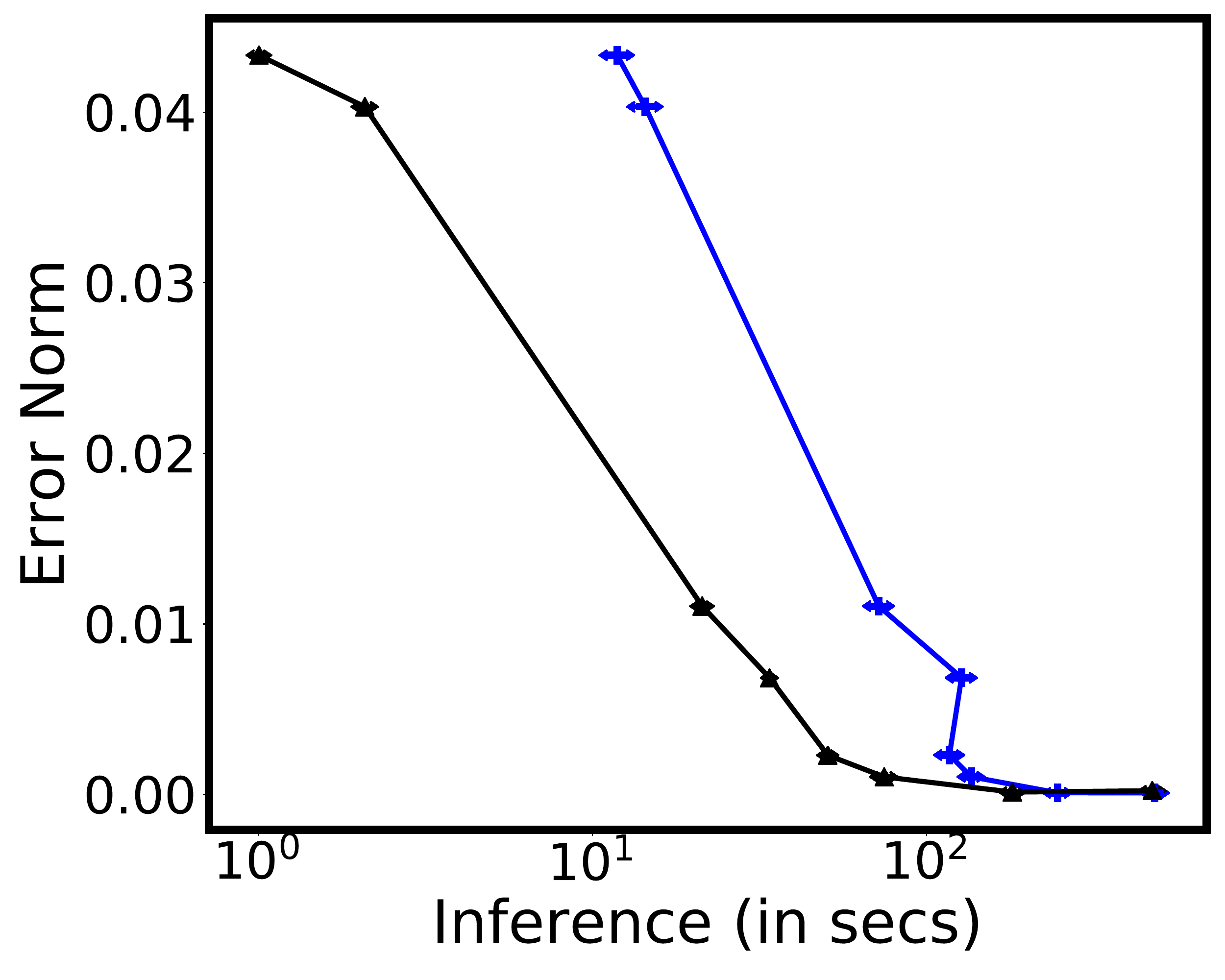}
    \end{subfigure}
    \begin{subfigure}{0.38\textwidth}
    \includegraphics[width=\textwidth]{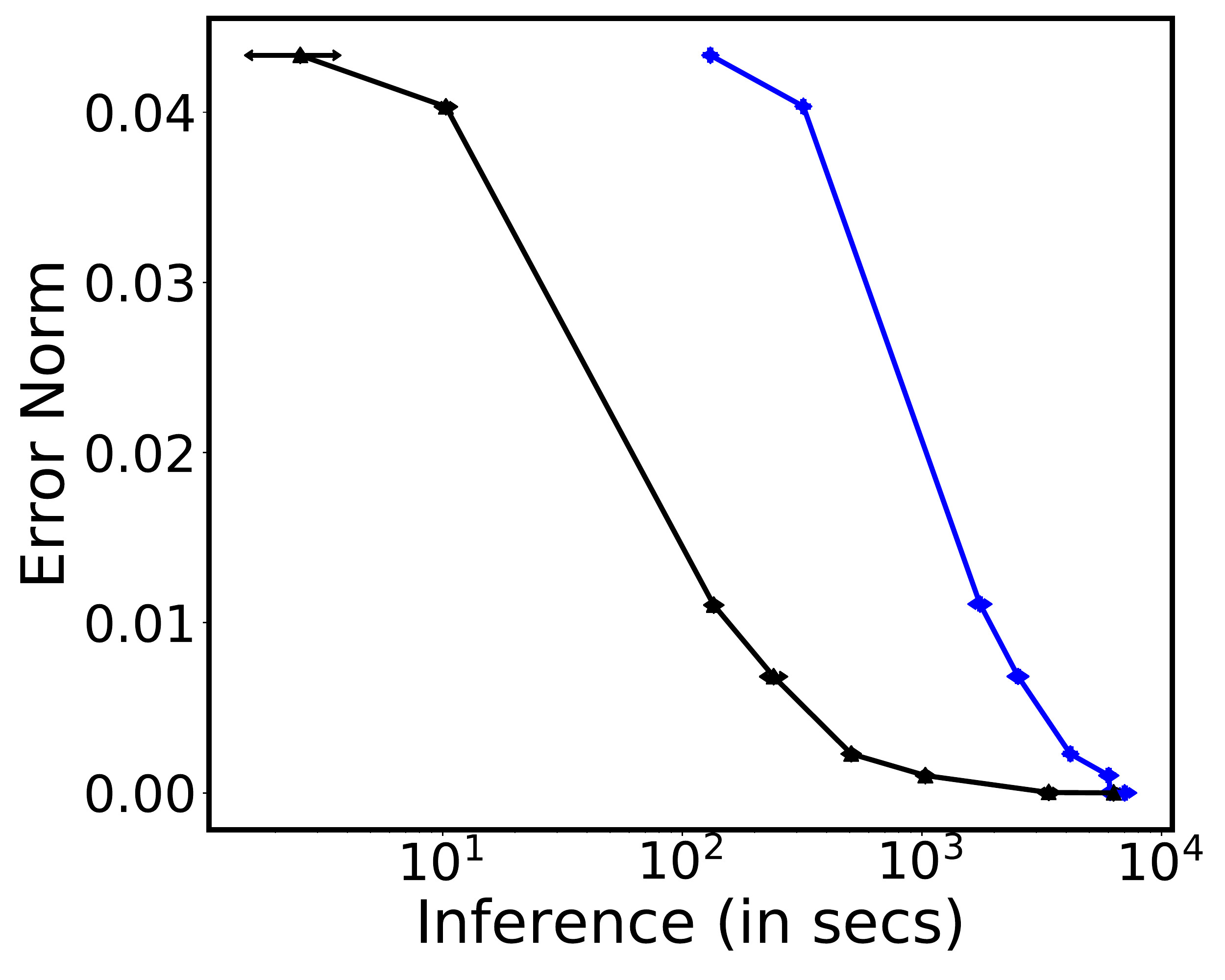}
    \end{subfigure}
    \end{center}
    \caption{\change{Error vs. runtime for approximate inference tasks on the sound dataset with varying grid size $m$. GSGP gives much faster runtimes for fixed $m$, allowing one to use a larger grid and achieve better runtime-accuracy tradeoffs than SKI.
Times are averaged over 20 trials and include pre-processing. See text for description of error metrics.}}

    \label{fig:sound_data_results}
\end{figure}

\textbf{Inference accuracy vs. time}.
A significant advantage of GSGP is the ability to realize speed-accuracy tradeoffs that were not previously possible. Figure~\ref{fig:sound_data_results} illustrates this for the sound data set ($n=59.3\K$) by comparing error vs. running time for four different GP inference tasks for grid sizes  $m \in \{1\K, 2\K, 5\K, 6\K, 8\K, 10\K, 30\K, 60\K \}$. For mean estimation we compute SMAE (mean absolute error normalized by the mean of observations) on a held-out test set of 691 points. For other tasks (log-likelihood, covariance) we compute error relative to a reference value computed with SKI for the highest $m$ using absolute difference for log-likelihood and Frobenius norm from the reference value for covariance matrices.
For log-likelihood, we use 30 samples and $tol=0.01$ for stochastic logdet approximation~\cite{ScalableLD}. For covariance, we compute the $691 \times 691$ posterior covariance matrix for test points, first using the exact SKI expressions (which requires 691 linear solves) and then using a rank-$k$ approximation~\cite{ConstantTimePD}, that, once computed, yields $\O(k)$ time approximations of posterior covariances, for  $k=\min\{m, 10000\}$. 
For each task, GSGP is faster when $m < n$, sometimes substantially so, and achieves the same accuracy, leading to strictly better time-accuracy tradeoffs.

\begin{figure}[h]
\begin{center}
    \begin{subfigure}{0.38 \textwidth}
    \includegraphics[width=\textwidth]{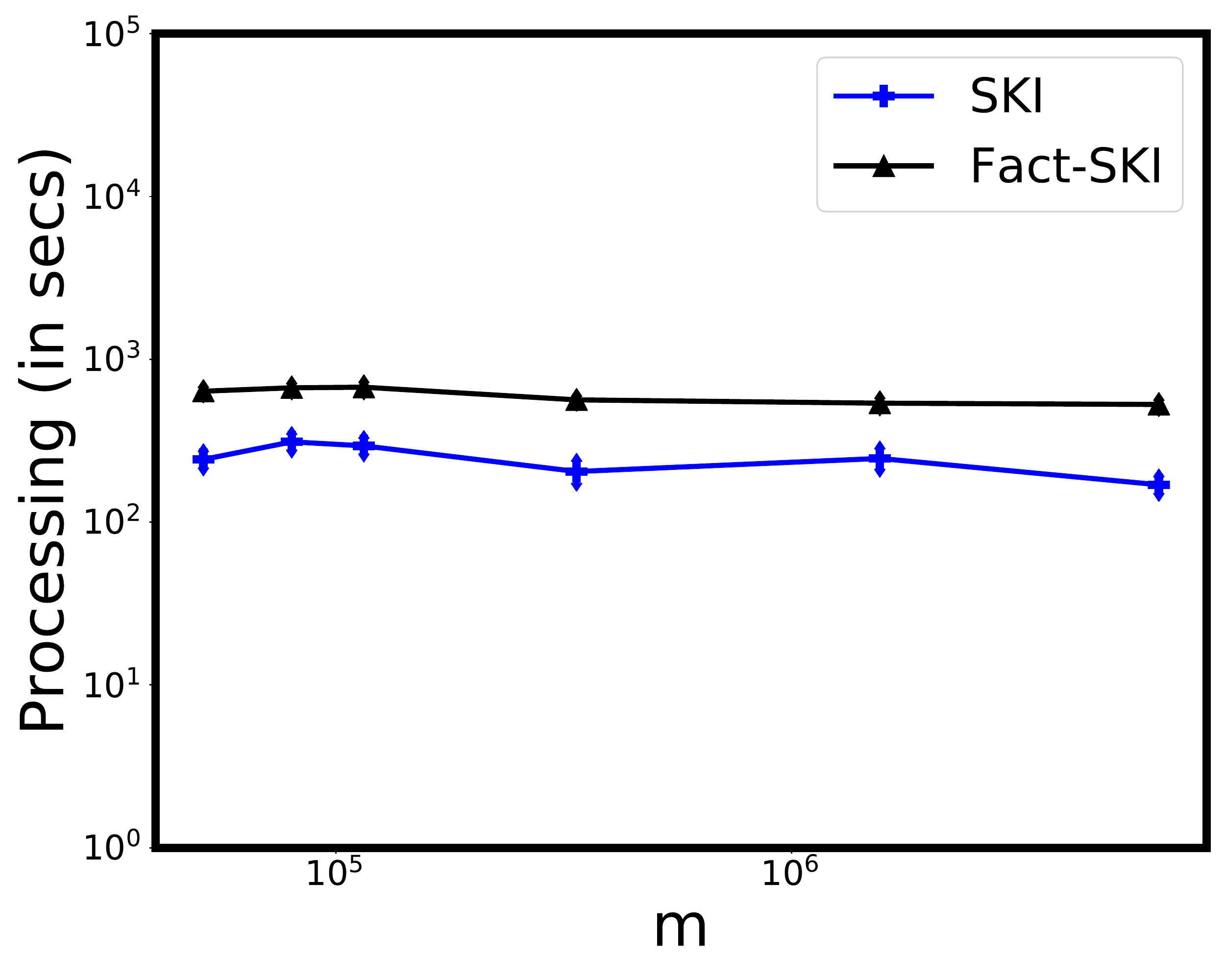}
    \end{subfigure}
    \begin{subfigure}{0.38 \textwidth}
    \includegraphics[width=\textwidth]{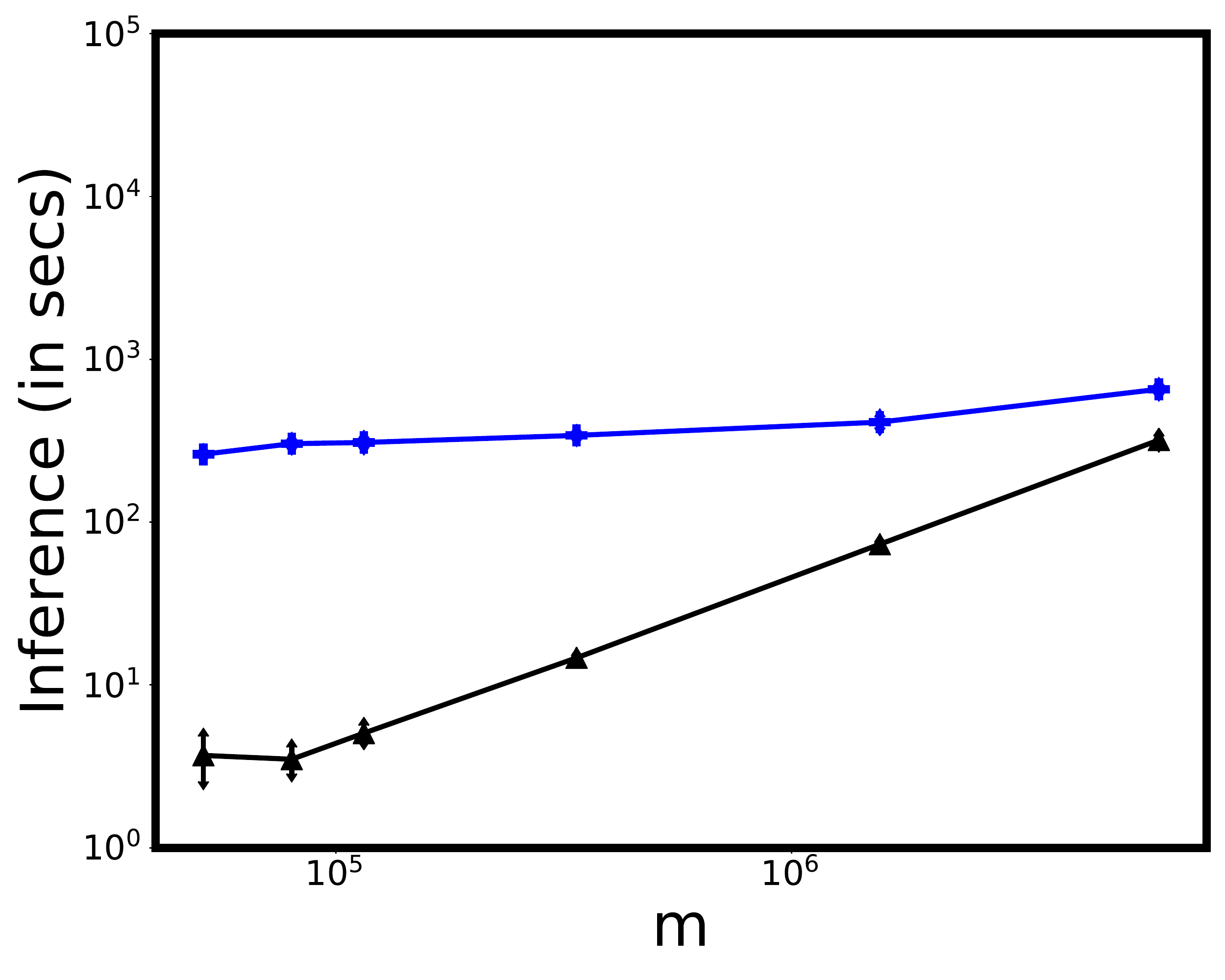}
    \end{subfigure}
    \begin{subfigure}{0.38 \textwidth}
    \includegraphics[width=\textwidth]{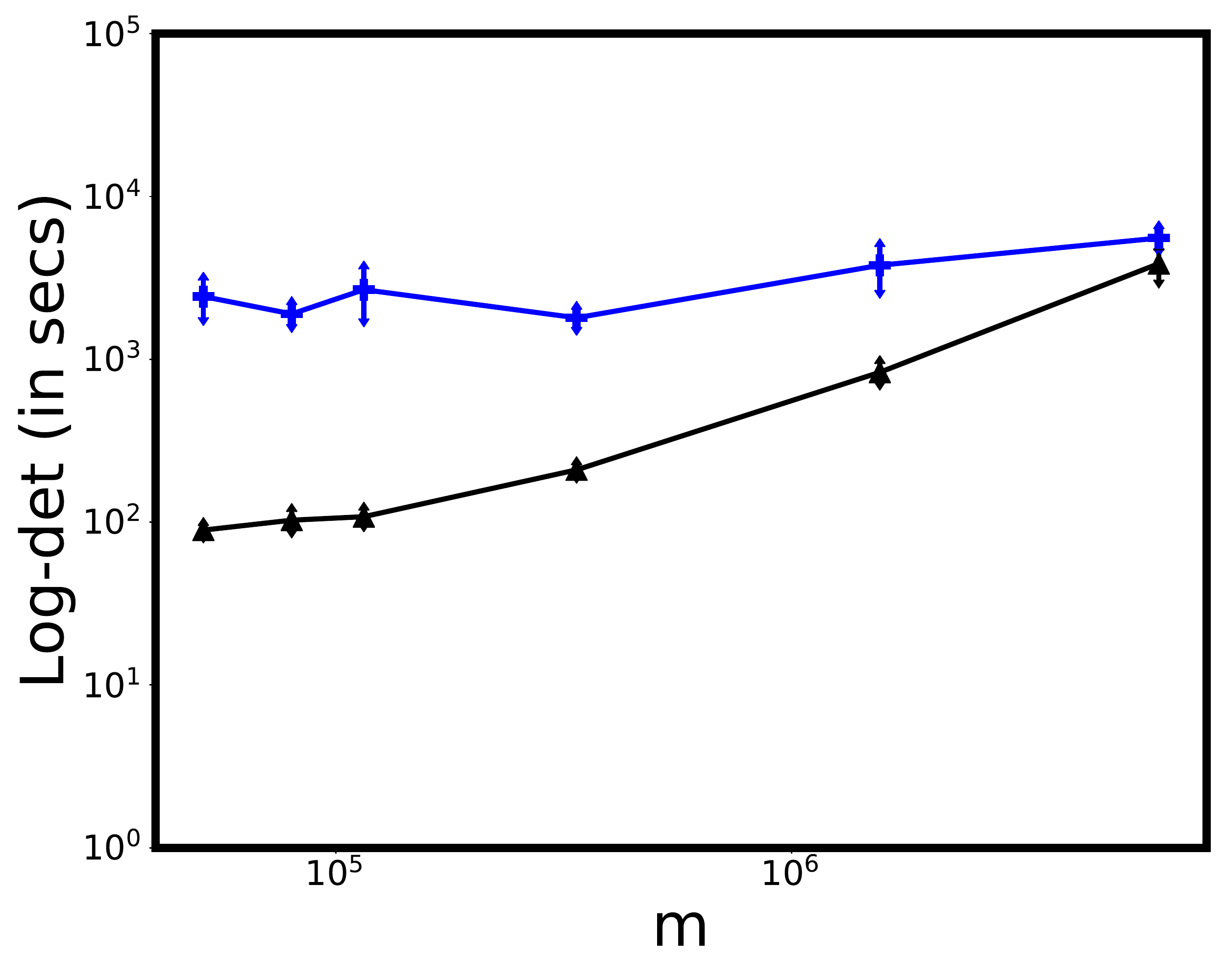}
    \end{subfigure}
    \begin{subfigure}{0.38 \textwidth}
    \includegraphics[width=\textwidth]{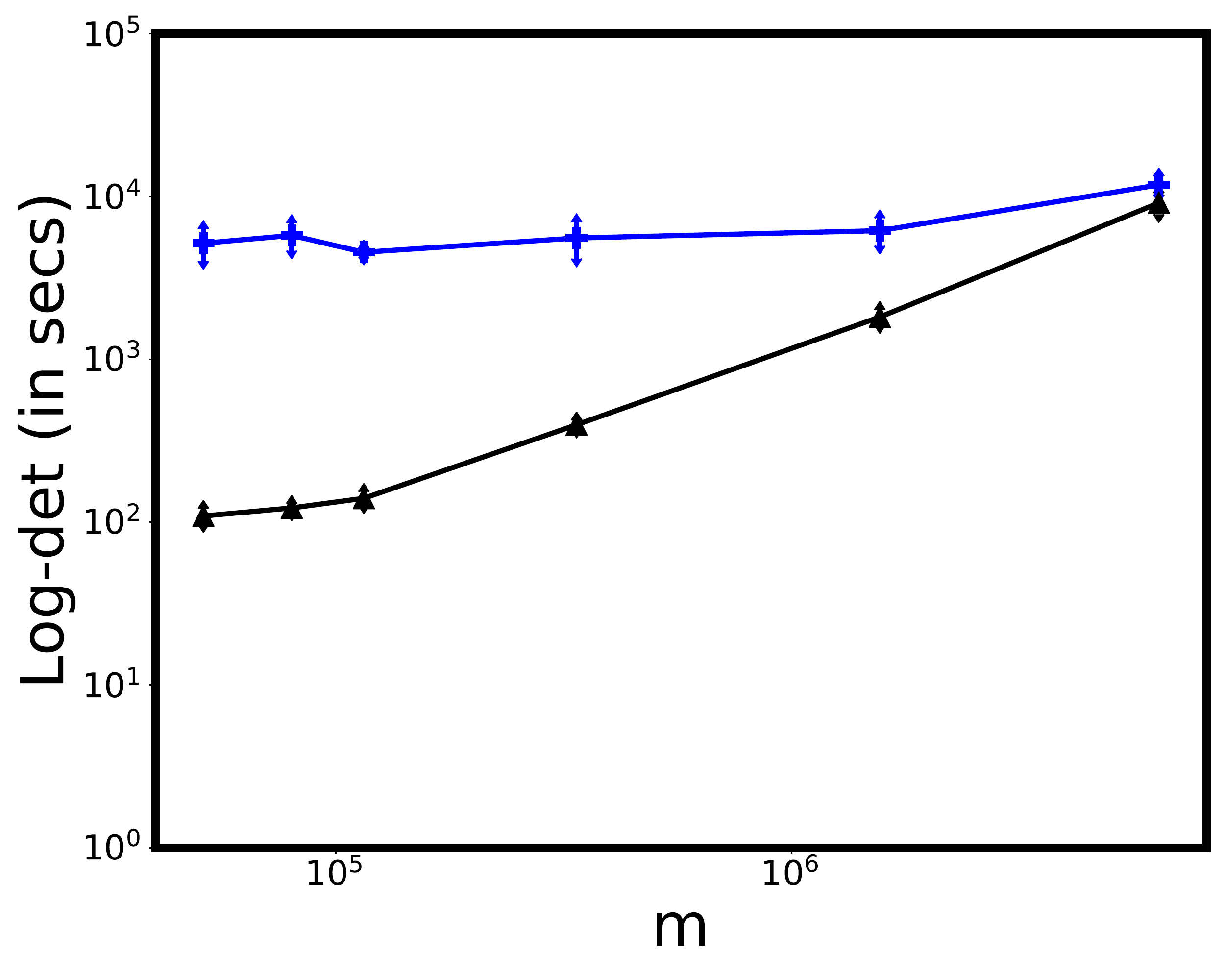}
    \end{subfigure}
    \caption{\change{Runtime vs. grid size $m$ for GP inference tasks on the radar data set with $10.5$M data points. We observe significantly faster runtimes for GSGP across all tasks and a wide range of grid sizes. From top to bottom: pre-processing time, mean inference runtime, and log-determinant runtime for $tol=0.1$ and $tol=0.01$ respectively. 30 random vectors are used in both log-determinant computations.}}
    \label{fig:results_on_radar_dataset}
    \end{center}
\end{figure}

\textbf{Very large $n$}. Figure~\ref{fig:results_on_radar_dataset} shows running time vs. $m$ for GP inference tasks on a data set of $n=10.5\M$ radar reflectivity measurements in three dimensions from 13 radar stations in the northeast US~\cite{angell2018inferring}.
This is a situation where $m \ll n$ is highly relevant: even the \emph{smallest} grid size of $51.2\K = 80 \times 80 \times 8$ is of scientific value for summarizing broad-scale weather and biological activity.
GSGP is much faster, e.g., roughly 150x and 15x faster for $m=51.2K$  on mean inference and log-likelihood estimation respectively after one-time pre-processing (first panel). Pre-processing is up to 3x slower for GSGP due to the need to compute $W^TW$.
To perform only one mean inference, the overall time of GSGP and SKI \emph{including} pre-processing is similar, which is consistent with the observation that typical solves use only tens of iterations, and some of the per-iteration gain is offset by pre-processing. 

However, scientific modeling is highly iterative, and tasks other than mean inference perform \emph{many} more iterations of linear solvers; the total time for GSGP in these cases is much smaller than SKI. In realistic applications with massive data sets, we expect $W^T W$ to be computed once and saved to disk. GSGP also has the significant advantage that its memory footprint is $\O(m)$, while SKI is $\O(m+n)$.
The data above is a \emph{subset} from a national radar network, which was the limit on which SKI could run without exceeding 10GB of memory. To demonstrate scalability of GSGP, we ran on data from the \emph{entire} national radar network with $n=120M$ for $m=128K$, on which SKI far exceeds this memory limit. On this problem, GSGP takes $4861.60$ +/- $233.42$ seconds for pre-processing, and then $9.33$ +/- $0.31$ seconds for mean inference (averaged over 10 trials).

\section{Conclusions and Future Work}
\label{sec:conclusion}

Our work shows that the SKI method for approximate Gaussian process regression in fact performs exact inference for a natural grid-structured Gaussian process. We leverage this observation to give an implementation for  the method with per-iteration complexity \emph{independent of the dataset size $n$}. \change{This leads to significantly improved performance on a range of problems, including the ability to scale GP inference to radar datasets with over $100$ million data points -- a regime far beyond what can typically be handled by SKI or other approximation methods, such as  inducing point and random feature approaches.} 

Our work leaves open a number of questions. Algorithmically, it would be interesting to explore if SKI can be efficiently implemented using direct, rather than iterative, solvers that take advantage of the Toeplitz and band-like structures of $K_G$ and $W^T W$ respectively. 
%
%
Theoretically, it would be interesting to further explore the grid-structured Gaussian process for which SKI performs exact inference. Intuitively, by interpolating data points to a grid, this method seems to suppress `high-frequency' components of the kernel covariance function. Can this be analyzed to lead to formal approximation guarantees or practical guidance in how to choose the grid size?

\bibliographystyle{abbrv}
\bibliography{main} 
\clearpage

\appendix
\begin{appendices}
\begin{center} \Large \textbf{Supplementary Appendices}
\end{center}

\section{Reformulation of SKI as Bayesian Linear Regression -- Omitted Details}\label{app:gsgp}

\subsection{Equivalence between GSGP and SKI Approximation}

\begin{repclaim}{claim:gsgpski}
  A GSGP with grid weights $\boldsymbol{\theta}$ drawn from a GP with covariance function $k(\bv{x}, \bv{x}')$ is itself a GP with covariance function $\tilde{k}(\bv x, \bv x') = \bv{w}_{\bv{x}}^T K_G \bv{w}_{\bv{x}'}$.
\end{repclaim}

\begin{proof}
  For any finite number of input points $\x_1, \ldots, \x_n$, the joint distribution of $(f(\x_1), \ldots, f(\x_n))$ is zero-mean Gaussian, since each $f(\x_i) = \w_{\x_i}^T \boldsymbol{\theta}$ is a linear transformation of the same zero-mean Gaussian random variable $\boldsymbol{\theta}$.
Therefore, $f(\x)$ is a zero-mean GP.
For a particular pair of function values $f(\x)$ and $f(\x')$, the covariance is given by
\begin{equation}
  \cov(f(\x), f(\x')) = \cov(\w_\x^T \boldsymbol{\theta}, \w_{\x'}^T \boldsymbol{\theta}) = \w_\x^T \cov(\boldsymbol{\theta}, \boldsymbol{\theta}) \w_{\x'} = \w_\x^T K_G \w_{\x'} = \tilde{k}(\x, \x').
  \label{eq:gsgpcov}
\end{equation}
Therefore $f$ has the claimed covariance function.
\end{proof}

\subsection{Exact Inference for GSGP}

For completeness we derive the exact GSGP inference equations for Fact \ref{fact:gsgp}.

\begin{proof}[Proof of Fact \ref{fact:gsgp}]
GSGP is a standard linear basis function model and it is well known (see e.g., \cite{bayesianRegression}) that the posterior mean and covariance of $\bs{\theta}$ given $\bv{y}$ are given by
\begin{align}\label{eq:thetaMean}
\E[\boldsymbol{\theta} | \bv y]  = \frac{1}{\sigma^2} \cdot \left [ K_G^{-1} + \frac{1}{\sigma^2} W^T W \right ]^{-1} \cdot W^T \bv{y}.
\end{align}
and 
\begin{align}\label{eq:thetaVar}
\var(\boldsymbol{\theta} | \bv y) =  \left [ K_G^{-1} + \frac{1}{\sigma^2} W^T W \right]^{-1}.
\end{align}
Using that for invertible $A,B$, $(AB)^{-1} = B^{-1} A^{-1}$ we can write: 
\begin{align*}
\frac{1}{\sigma^2} \cdot \left [ K_G^{-1} + \frac{1}{\sigma^2} W^T W \right ]^{-1} = \frac{1}{\sigma^2} \cdot \left [ I + \frac{1}{\sigma^2} K_G W^T W \right ]^{-1} K_G = (K_G W^T W + \sigma^2 I)^{-1} K_G.
\end{align*} 
Plugging back into the posterior mean equation \eqref{eq:thetaMean} gives
\begin{align*}
\E[\boldsymbol{\theta} | \bv y] = (K_G W^T W + \sigma^2 I)^{-1} K_GW^T \bv{y} = \bv{\bar z}.
\end{align*} 
By linearity since $f(\bv{x}) = \bv w_{\bv{x}}^T \bs{\theta}$, this gives the GSGP posterior mean equation, $\mu_{f|\mathcal{D}}(\mathbf{x}) = \bv w_{\bv{x}}^T  \bv{\bar z}$.
Similarly, plugging back into the posterior variance equation \eqref{eq:thetaVar} gives:
\begin{align*}
\var(\boldsymbol{\theta} | \bv y) = \sigma^2 (K_G W^T W + \sigma^2 I)^{-1}K_G = \bv{\bar C}.
\end{align*}
Again by linearity this gives the GSGP posterior variance equation $ k_{f|\mathcal{D}}(\mathbf{x}, \mathbf{x}^{\prime}) =  \bv{w}_\bv{x}^T \bv{\bar{C}} \bv{w}_\bv{x'}$.

Finally, it is well known \cite{bayesianRegression} that the log likelihood of $\bv{y}$ under the GSGP linear basis function model is given by:
\begin{align*}
&\log \Pr(\bv{y}) = -\frac{1}{2} \left [ n \log (2\pi) + n \log (\sigma^2) + \log \det K_{G} -  \log \det(\bv{\bar C}) + \frac{1}{\sigma^2} \bv{y}^T \bv{y} - \bv{\bar z}^T \bv{\bar C}^{-1} \bv{\bar z}  \right  ]
\end{align*}
We can split $\log \det(\bv{\bar C}) = \log \det(\sigma^2 (K_{G} W^T W + \sigma^2 I)^{-1} K_{G}) = m \log \sigma^2 - \log \det(K_{G} W^T W + \sigma^2 I) + \log\det(K_{G})$, and canceling this gives:
\begin{align}\label{eq:splitGP}
&\log \Pr(\bv{y}) = -\frac{1}{2} \left  [ n \log (2\pi) + (n-m)\log \sigma^2 + \log \det(K_{G} W^T W + \sigma^2 I) + \frac{1}{\sigma^2} \bv{y}^T \bv{y} - \bv{\bar z}^T \bv{\bar C}^{-1} \bv{\bar z}    \right  ]
\end{align}
Additionally we can write:
\begin{align*}
 \bv{\bar z}^T \bv{\bar C}^{-1} \bv{\bar z}   &= \frac{1}{\sigma^2} \bv{\bar z}^T  K_G^{-1} (K_G W^T W + \sigma^2 I)  (K_G W^T W + \sigma^2 I)^{-1}  K_G W^T \bv{y} =\frac{1}{\sigma^2} \bv{\bar z}^T W^T  \bv{y}.
\end{align*}
Thus we have $ \frac{1}{\sigma^2} \bv{y}^T \bv{y} - \bv{\bar z}^T \bv{\bar C}^{-1} \bv{\bar z} = \frac{\bv{y}^T (\bv{y} - W\bv{\bar z})}{\sigma^2} $. Plugging back into \eqref{eq:splitGP} yields the log likelihood formula of Fact \ref{fact:gsgp}, completing the proof.
\end{proof}

\subsection{Equivalence of GSGP Exact Inference and SKI}

Theorem \ref{thm:gsgp}, which states that the GSGP and SKI inference equations are identical, follows directly from Claim \ref{claim:gsgpski}, as GSGP is a Gaussian process whose kernel is exactly the approximate kernel used by SKI. 
For illustrative purposes, we also give a purely linear algebraic proof of Theorem \ref{thm:gsgp} below:
\begin{reptheorem}{thm:gsgp}
The inference expressions of Fact \ref{fact:gsgp} are identical to the SKI approximations of Def. \ref{def:kissgp}.
\end{reptheorem}
\begin{proof}[Linear Algebraic Proof of Theorem \ref{thm:gsgp}]
  The equivalence between the expressions for the posterior mean and posterior variance is a standard manipulation to convert between the ``weight space'' and ``function space'' view of a GP with an explicit feature expansion, e.g., Eqs. (2.11), (2.12) of \cite{rasmussen2004gaussian}. We provide details with our notation for completeness. The correspondence between the two expressions for the log-likelihood is due to the same correspondence between ``weight space'' and ``function space'' views, though we are not aware of a specific reference that provides the formula in Fact~\ref{fact:gsgp}.

First, recall these definitions from Def. \ref{def:kissgp} and Fact \ref{fact:gsgp}:
\begin{align*}
  \tilde{K}_X &= W K_G W^T \\
  \bv{\tilde z} &= \left (WK_GW^T + \sigma^2 I\right )^{-1} \mathbf{y} \\
  \bv{\bar z} &= (K_{G} W^T W + \sigma^2 I)^{-1} K_{G} W^{T} \bv y 
\end{align*}

\textbf{Mean:}
The two mean expressions are $\w_\x K_G W^T \bv{\tilde z}$ (Def. \ref{def:kissgp}) and $\w_\x \bv{\bar z}$ (Fact \ref{fact:gsgp}). Expanding these, it suffices to show that
\begin{align}\label{eq:meanShrink}
K_{G} W^T \left (W K_{G}W ^{T} + \sigma^2 I\right )^{-1} = \left (K_{G}W^{T} W + \sigma^2 I\right )^{-1} K_{G} W^T.
\end{align} This follows from the following claim:
\begin{claim}\label{clm:covEquivilance}
Let $A \in \R^{m \times n}$ and $B \in \R^{n \times m}$. Then
$$
A(BA + \sigma^2 I_n)^{-1} = (AB + \sigma^2 I_m)^{-1}A
$$
as long as $BA + \sigma^2 I_n$ and $AB + \sigma^2 I_m$ are both invertible.
\label{claim:identity}
\end{claim}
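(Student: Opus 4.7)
The plan is to prove this via the classical ``push-through'' identity for matrix products, which in this setting amounts to a one-line algebraic observation combined with multiplication by the assumed inverses on either side.

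First, I would establish the key identity $A(BA + \sigma^2 I_n) = (AB + \sigma^2 I_m) A$ by simply expanding both sides: the left-hand side becomes $ABA + \sigma^2 A$, and the right-hand side becomes $ABA + \sigma^2 A$. Note that this does not require any invertibility and only uses associativity and distributivity of matrix multiplication, together with the fact that $\sigma^2 I_n$ acts as $\sigma^2$ on anything multiplied from the right and $\sigma^2 I_m$ acts as $\sigma^2$ on anything multiplied from the left.

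Second, I would use the invertibility hypotheses to turn this equality into the desired one. Specifically, I would right-multiply both sides of $A(BA + \sigma^2 I_n) = (AB + \sigma^2 I_m) A$ by $(BA + \sigma^2 I_n)^{-1}$ and left-multiply by $(AB + \sigma^2 I_m)^{-1}$, yielding
\begin{equation*}
(AB + \sigma^2 I_m)^{-1} A = A (BA + \sigma^2 I_n)^{-1},
\end{equation*}
which is exactly the claimed identity. The invertibility of $BA + \sigma^2 I_n$ and $AB + \sigma^2 I_m$ is used in precisely this step, to make sense of the inverses and to cancel the factors from the starting identity.

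There is essentially no obstacle here: the entire content of the claim is the algebraic identity in the first step. This is a short and standard manipulation, and the only thing to be careful about is the asymmetry in sizes between the two inverses (one is $n \times n$, the other is $m \times m$), so I would state the matrix dimensions explicitly throughout to make the proof unambiguous. After establishing the claim, I would then apply it with $A = K_G W^T \in \R^{m \times n}$ and $B = W \in \R^{n \times m}$ to obtain \eqref{eq:meanShrink} and complete the equivalence argument for the posterior mean in Theorem~\ref{thm:gsgp}.
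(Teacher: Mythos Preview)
Your proposal is correct and matches the paper's proof essentially line for line: the paper also observes that $(AB + \sigma^2 I_m)A = A(BA + \sigma^2 I_n) = ABA + \sigma^2 A$ and then left- and right-multiplies by the assumed inverses to conclude. Your remark about applying the claim with $A = K_G W^T$ and $B = W$ is likewise exactly how the paper proceeds.
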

\begin{proof}
Observe that
$
(AB + \sigma^2 I_m) A = A(BA + \sigma^2 I_n) = ABA + \sigma^2 A.
$
If the matrices $AB + \sigma^2 I_m$ and $BA + \sigma^2 I_n$ are invertible the result follows by left- and right-multiplying by the inverses.
\end{proof}
We obtain \eqref{eq:meanShrink} from Claim \ref{clm:covEquivilance}, applied with $A = K_{G}W ^{T}$ and $B = W$. As required by the claim, we have that both $K_{G}W ^{T} W + \sigma^2 I$ and $W K_{G}W ^{T} + \sigma^2 I$ are invertible. For $\sigma > 0$, $W K_{G}W ^{T} + \sigma^2 I$ is positive definite, which implies invertibility. $K_{G}W ^{T} W$ is similar to the positive semidefinite matrix $K_G^{1/2} W^T W K_G^{1/2}$, and thus has all non-negative eigenvalues. Thus, for $\sigma > 0$, $K_{G}W ^{T} W + \sigma^2 I$ has all positive eigenvalues (in particular, it has no zero eigenvalue) and so is invertible.

\textbf{Covariance:}
The two expressions for covariance are $\w_x \bv{\tilde C} \w_x$ (Def.~\ref{def:kissgp}) and $\w_\x \bv{\bar C} \w_x$ (Fact.~\ref{fact:gsgp}) with
\begin{align*}
  \bv{\tilde C} &= K_{G} - K_G W^T (W K_G W^T +\sigma^2 I)^{-1} W K_G \\
  \bv{\bar C} &= \sigma^2 \left( K_G W^{T}W   + \sigma^2 I \right)^{-1}  K_{G}
\end{align*}
So it suffices to show that $\bv{\tilde C} = \bv{\bar C}$. Using Eq.~\ref{eq:meanShrink}, we have that
$$
\bv{\tilde C} = K_{G} - (K_{G} W^T W+\sigma^2 I)^{-1} K_{G} W^T W K_{G}.
$$
Now, factor out $K_G$ and simplify to get:
\begin{align*}
\bv{\tilde C} &= [I - (K_{G} W^T W+\sigma^2 I)^{-1} K_{G}W^T W] K_{G}\\
&=  [(K_{G} W^T W+\sigma^2 I)^{-1} \cdot (K_{G} W^T W+\sigma^2 I - K_{G} W^T W)] K_{G}\\
&= \sigma^2 (K_G W^T W + \sigma^2 I)^{-1} K_{G} = \bv{\bar C}
\end{align*}

\textbf{Log likelihood:}
By matching terms in the two expressions and using the definition of $\tilde{K}_X$, it suffices to show both of the following:
\begin{align}
\bv{y}^T \bv{\tilde z} &= \frac{\bv{y}^T (\bv{y} - W \bv{\bar z})}{\sigma^2}, \label{logsuffices0}\\
\log\det(W K_G W^T + \sigma^2 I) &= \log\det(K_GW^T W + \sigma^2 I) + (n-m) \log \sigma^2.  \label{logsuffices1}
\end{align}
For Eq.~\eqref{logsuffices0}, we first observe that by our argument above that $\bv{\bar z} = K_G W^T \bv{\tilde z}$, we have
\begin{align}\label{logsuffices2}
 \frac{\bv{y}^T (\bv{y} - W \bv{\bar z})}{\sigma^2} = \frac{\bv{y}^T (\bv{y} - W K_G W^T \bv{\tilde z})}{\sigma^2} = \frac{\bv{y}^T (I - W K_G W^T (W K_G W^T + \sigma^2  I)^{-1}) \bv{y}}{\sigma^2}.
 %
\end{align}
We have:
\begin{align*}
(I - W K_G W^T (W K_G W^T + \sigma^2  I)^{-1}) &= [W K_G W^T + \sigma^2  I - W K_G W^T] (W K_G W^T + \sigma^2  I)^{-1}\\
& = \sigma^2 (W K_G W^T + \sigma^2  I)^{-1}.
\end{align*}
Thus we can simplify \eqref{logsuffices2} to:
\begin{align}\label{logsuffices3}
 \frac{\bv{y}^T (\bv{y} - W \bv{\bar z})}{\sigma^2} = \bv{y}^T (W K_G W^T + \sigma^2  I)^{-1} \bv{y} = \bv{y}^T \bv{\tilde z}.
\end{align}
It remains to prove Eq.~\ref{logsuffices1}. This follows from the general claim:

\begin{claim}\label{clm:det} Let $A \in \R^{m \times n}$ and $B \in \R^{n \times m}$. Then:
$$
\det(BA + \sigma^2 I_n) = (\sigma^2)^{n-m} \det(AB + \sigma^2 I_m)
$$
\end{claim}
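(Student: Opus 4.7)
The plan is to recognize this as essentially the Sylvester determinant identity $\det(I_n + XY) = \det(I_m + YX)$ (valid for $X \in \R^{n \times m}$, $Y \in \R^{m \times n}$), applied after pulling out a factor of $\sigma^2$. Concretely, I would factor
\[
\det(BA + \sigma^2 I_n) = \sigma^{2n} \det\bigl(I_n + \tfrac{1}{\sigma^2} BA\bigr),
\]
then invoke Sylvester's identity with $X = B$ and $Y = \tfrac{1}{\sigma^2} A$ to rewrite this as $\sigma^{2n} \det(I_m + \tfrac{1}{\sigma^2} AB)$, and finally pull $\sigma^{-2m}$ back in to get $\sigma^{2(n-m)} \det(\sigma^2 I_m + AB)$, which is the desired identity.

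To make the argument fully self-contained (in case the reader wants a reference-free proof), the clean way is via a Schur-complement computation on the $(n+m) \times (n+m)$ block matrix
\[
M = \begin{pmatrix} \sigma^2 I_m & -A \\ B & I_n \end{pmatrix}.
\]
Computing $\det M$ two ways using the two Schur complement formulas gives, on the one hand, $\det(\sigma^2 I_m) \cdot \det\!\bigl(I_n + B(\sigma^2 I_m)^{-1} A\bigr) = \sigma^{2m} \cdot \det(I_n + \tfrac{1}{\sigma^2} BA)$, and on the other hand, $\det(I_n) \cdot \det(\sigma^2 I_m + AB) = \det(\sigma^2 I_m + AB)$. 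Equating the two expressions and using $\det(I_n + \tfrac{1}{\sigma^2} BA) = \sigma^{-2n} \det(\sigma^2 I_n + BA)$ yields exactly the claim.

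The only mild subtlety is ensuring the Schur-complement identities apply, which is immediate here since the pivot blocks $\sigma^2 I_m$ and $I_n$ are invertible for $\sigma > 0$; for $\sigma = 0$ the identity reduces to $\det(BA) = 0 = \det(AB)$ when $n > m$ (since $BA$ is $n \times n$ of rank at most $m < n$) and to $\det(BA) = \det(AB)$ when $n = m$, so it holds trivially there as well, and by continuity in $\sigma$ it suffices to have established it for $\sigma > 0$. There is no real obstacle here — the step is standard linear algebra — so the proof is short and the main thing to get right is simply tracking the powers of $\sigma$ when moving between the $I_n + \tfrac{1}{\sigma^2} BA$ and $\sigma^2 I_n + BA$ forms.
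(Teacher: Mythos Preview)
Your proposal is correct and takes essentially the same approach as the paper: both recognize the claim as a scaled Weinstein--Aronszajn (Sylvester) identity and prove it by computing the determinant of a $2\times 2$ block matrix two ways via Schur complements. The only cosmetic difference is the choice of block matrix---the paper uses $\begin{pmatrix} \sigma I_m & -A \\ B & \sigma I_n \end{pmatrix}$ where you use $\begin{pmatrix} \sigma^2 I_m & -A \\ B & I_n \end{pmatrix}$---but the computation and the tracking of $\sigma$-powers are otherwise the same.
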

\begin{proof}
The claim generalizes the Weinstein-Aronszajn identity, which states that $\det(BA + I_n) = \det(AB + I_m)$. It can be proven using the block determinant formulas in \cite{blockFormulas}. 
Let $C = \begin{bmatrix} \sigma I_m & -A\\ 
B & \sigma I_n
\end{bmatrix}.$
We have:
\begin{align*}
\det(C) = \det(\sigma I_m) \cdot \det(B (\sigma I_m)^{-1} A + \sigma I_n) &= \det(\sigma I_m) \cdot \det \left (\frac{1}{\sigma} (B A + \sigma^2 I_n) \right )\\
& = \det(\sigma I_m) \cdot \det \left (\frac{1}{\sigma} I_n \right ) \cdot \det(B A + \sigma^2 I_n)
\end{align*}
and similarly
\begin{align*}
\det(C) = \det(\sigma I_n) \cdot \det(A  (\sigma I_n)^{-1}  B + \sigma I_m) &= \det(\sigma I_n) \cdot \det \left (\frac{1}{\sigma} (A  B + \sigma^2 I_m)\right ) \\
& = \det(\sigma I_n) \cdot \det \left (\frac{1}{\sigma} I_m \right ) \cdot \det (A  B + \sigma^2 I_m).
\end{align*}
 Thus, 
 \begin{align*}
\det(\sigma I_m) \cdot \det \left (\frac{1}{\sigma} I_n \right ) \cdot \det(BA + \sigma^2 I_n) &=  \det(\sigma I_n) \cdot \det \left (\frac{1}{\sigma} I_m \right ) \cdot \det(AB + \sigma^2 I_m)\\
\det(B A + \sigma^2 I_n) &= \sigma^{2(n-m)} \cdot \det(AB + \sigma^2 I_m),
 \end{align*}
 giving the claim.
\end{proof}
Applying Claim \ref{clm:det} with $A = K_G W^T$ and $B = W$ gives that $\det(\tilde W K_G W^T + \sigma^2 I) = (\sigma^2)^{n-m} \det(K_GW^T W + \sigma^2 I) $ which in turn gives that $\log \det(W K_G W^T + \sigma^2 I) = \log\det(K_GW^T W + \sigma^2 I) + (n-m)\log \sigma^2$ and so completes Eq.~\eqref{logsuffices1} and the theorem.
\end{proof}

\subsection{Efficiency of Computing and Multiplying $W^TW$}

\begin{repclaim}{claim:WTW}
    Assume that $G = \{\bv{g}_1,\ldots,\bv{g}_m\}$ has spacing $s$, i.e., $\norm{\bv{g}_i-\bv{g}_j}_\infty \ge s$ for any $i,j \in m$. Also assume that $\w^j_{\x}$ is non-zero only if $\|\g_j - \x\|_\infty < r \cdot s$ for some fixed integer $r$. Then $W^T W$ can be computed in $\mathcal{O}(n(2r)^{2d})$ time and has at most $(4r-1)^{d}$ entries per row. Therefore $mv(W^TW) = \mathcal{O}(m (4r-1)^d).$
\end{repclaim}

\begin{proof}
First, write $W^TW$ as a sum over data points:
$$W^T W = \sum_{i=1}^n \w_{\x_i} \w_{\x_i}^T.$$

The vector $\w_{\x_i}$ has nonzeros only for grid points $\g_j$ such that $\|\g_j - \x_i\|_{\infty} \le r \cdot s$. Since the grid points have spacing $s$, there are at most $(2r)^d$ such grid points. Therefore the outer product $\w_{\x_i} \w_{\x_i}^T$ has at most $(2r)^{2d}$ non-zeros, and the sum can be computed in $\mathcal{O}(n(2r)^{2d})$ time. 

The sum is also sparse. The entry $(W^T W)_{jk}$ is non-zero only if there is \emph{some} data point $\x_i$ within distance $r$ from both $\g_j$ and $\g_k$ in each dimension. This is true only if $\|\g_j - \g_k\|_\infty < 2r \cdot s$. For a given grid point $\g_j$, there are at most $(4r-1)^d$ grid points $\g_k$ satisfying $\|\g_j - \g_k\|_\infty < 2r \cdot s$ (e.g., in 1 dimension there are $2r-1$ neighbors to the left, $2r-1$ neighbors to the right, plus the case $k=j$). Therefore the $j$th row of $W^TW$ has as most $(4r-1)^d$ nonzeros, as claimed.
\end{proof}

\subsection{Symmetric Reformulation of GSGP}

Recall that directly replacing the SKI approximate  inference equations of Definition \ref{def:kissgp} with the GSGP inference equations of Fact \ref{fact:gsgp} reduces per-iteration cost from $\O(n +m \log m)$ to $\O(m \log m)$. However, the matrix $K_G W^T W + \sigma^2 I$ is \emph{asymmetric}, which prevents the application of symmetric system solvers like the conjugate gradient method with strong guarantees.

Here we describe a symmetric reformulation of GSGP that doesn't compromise on per-iteration cost.
We write the matrix $(K_G W^T W + \sigma^2 I)^{-1} $, whose application is required in both the posterior mean and covariance computation as:
\begin{align*}
 (K_{G} W^T W + \sigma^2 I)^{-1} &=  (K_{G} W^T W  K_G  K_G^{-1} + \sigma^2 K_G  K_G^{-1}   )^{-1}  \\
&= K_G  (K_{G} W^T W  K_G  + \sigma^2 K_G  )^{-1} .
\end{align*}
Applying the above matrix requires a symmetric solve in $(K_{G} W^T W  K_G  + \sigma^2 K_G  )^{-1} $, along with a single $\O(m \log m)$ time MVM with $K_G$. Our per-iteration complexity thus remains at $\O(m \log m)$ -- the matrix vector multiplication time for $K_{G} W^T W  K_G  + \sigma^2 K_G$. However, this matrix is no longer of the `regularized form' $A + \sigma^2 I$, and may have worse condition number than $W K_G W^T + \sigma^2 I$, possibly leading to slower convergence of iterative solvers like CG as compared to SKI.

We can similarly  symmetrize the logdet computation in the likelihood expression by writing
$$\log\det(K_G W^T W + \sigma^2 I) = \log\det(K_G W^T W K_G + \sigma^2 K_G) - \log\det(K_G).$$
Again however, it is unclear how this might affect the convergence of iterative methods for logdet approximation

\section{Factorized Iterative Methods -- Omitted Details}\label{app:factorized}

\subsection{Proofs for Factorized Update Steps}

We give proofs of our fundamental factorized update steps described in Claims \ref{clm:goldenRule} and \ref{clm:goldenRule2}. 

\begin{repclaim}{clm:goldenRule}
For any $\bv{z}_i \in \R^n$ with $\bv{z}_i = W \bv{\hat z}_i + c_i \bv{z}_0$, 
\begin{align*}
(WK_GW^T + \sigma^2 I) \bv{z}_i = W \bv{\hat z}_{i+1} + c_{i+1} \bv{z}_0,
\end{align*}
where $\bv{\hat z}_{i+1} = (K_G W^T W + \sigma^2 I) \bv{\hat z}_{i} + c_i K_G W^T \bv{z}_0$ and $c_{i+1} = \sigma^2 \cdot c_i$.
Call this operation a \emph{factorized update} and denote it as $(\bv{\hat z}_{i+1}, c_{i+1}) = \mathcal{A}(\bv{\hat z}_{i},c_i)$.
If the vector $K_G W^T \bv{z}_0$ is precomputed in $\mathcal{O}(n + m \log m)$ time, each subsequent factorized update takes $\mathcal O(m \log m)$ time.
\end{repclaim}
 \begin{proof}
  We have:
  \begin{align*}
  (WK_GW^T + \sigma^2 I) \bv{z}_i  &= (WK_GW^T + \sigma^2 I) W\bv{\hat z}_i + c_i (WK_GW^T + \sigma^2 I) z_0\\
  &= W(K_G W^T W + \sigma^2 I) \bv{\hat z}_i  + c_i W(K_GW^T z_0) + c_i \cdot \sigma^2 \cdot z_0\\
  &= W \left [(K_G W^T W + \sigma^2 I) \bv{\hat z}_i + c_iK_GW^T z_0 \right ] + c_i \cdot \sigma^2 \cdot z_0.
  \end{align*}
  which completes the derivation of the update.

As discussed in Sec.~\ref{sec:gsgpIter}, it takes $\O(n + m \log m)$ time to precompute $K_G W^T \bv{z}_0$, after which: it takes $\O(m \log m)$ time to compute $(K_G W^T W + \sigma^2 I) \bv{\hat z}_{i}$, it takes $\mathcal O(m)$ time to add in $ c_i K_G W^T \bv{z}_0$, and it takes $\mathcal{O}(1)$ time to update $c_i$, for a total of $\O(m \log m)$ time for each update.

 \end{proof}
 
 \begin{repclaim}{clm:goldenRule2}
For any $\bv{z}_i,\bv{y}_i \in \R^n$ with $\bv{z}_i = W \bv{\hat z}_i + c_{i} \bv{z}_0$ and $\bv{y}_i = W \bv{\hat y}_i + d_{i} \bv{y}_0$,
\begin{align*}
\bv{z}_i^T \bv{y}_i = \bv{\hat z}_i^T W^T W \bv{\hat y}_i + d_{i} \bv{\hat z}_i^T W^T  \bv{y}_0 + c_{i} \bv{\hat y}_i^T W^T  \bv{z}_0 +   c_{i}  d_i  \bv{y}_0^T  \bv{z}_0.
\end{align*}
We denote the above operation by $\langle (\bv{\hat z}_i, c_{i}), (\bv{\hat y}_i, d_{i}) \rangle$.
\end{repclaim}
\begin{proof}
We have:
\begin{align*}
\bv{z}_i^T \bv{y}_i = (W \bv{\hat z}_i + c_{i} \bv{z}_0)^T (W \bv{\hat y}_i + d_{i} \bv{y}_0) = \bv{\hat z}_i^T W^T W \bv{\hat y}_i + d_{i} \bv{\hat z}_i^T W^T  \bv{y}_0 + c_{i} \bv{\hat y}_i^T W^T  \bv{z}_0 +   c_{i}  d_i  \bv{y}_0^T  \bv{z}_0,
\end{align*}
giving the claim.
\end{proof}

\begin{algorithm}
\caption{Efficiently Factorized Conjugate Gradient  (EFCG)}
\label{alg:efcg_2}
\begin{algorithmic}[1]
\Procedure {EFCG}{$K_G, W, \bv{b}, \sigma, \bv{x}_0, \epsilon $} 
\State \textit{New Iterates:} 
\State \hspace{1cm} $\mathbf{\hat{v}}_k = \left[ K_GW^TW + \sigma^2I \right] \mathbf{\hat{r}}_k$,$\mathbf{\hat{u}}_k = W^TW \mathbf{\hat{r}}_k$
\State \hspace{1cm}  $\mathbf{\hat{z}}_k = \left[ K_GW^TW + \sigma^2I \right] \mathbf{\hat{p}}_k$, $\mathbf{\hat{s}}_k = W^TW \mathbf{\hat{p}}_k$
\State $\bv{r_0} = \bv{b} - \tilde K \bv{x}_0,\, \bv{\hat r_0} = \bv{0},\, c_{0}^r = 1$
\State $\mathbf{\hat{p}}_{0} = \bv{0},\, c_{0}^p = 1$, $\, \bv{\hat x}_0 = \bv{0},\, c_{0}^x = 0$
\State $ \mathbf{\hat{v}}_0  = \mathbf{0}, \mathbf{\hat{u}}_0  = \mathbf{0}$
\State $ \mathbf{\hat{z}}_0  = \mathbf{0}, \mathbf{\hat{s}}_0  = \mathbf{0}$
\For{$k = 0$ to maxiter} 
\State $\alpha_{k} =  \frac{\mathbf{\hat{u}}_{k}^{T} \mathbf{\hat{r}}_{k} +  c_{k}^r c_{k}^r \norm{\bv{r}_0}  + 2  c_{i}^r  (\mathbf{\hat{r}}_{i}^{T}  W^T\bv{r}_0 )}{ \mathbf{\hat{s}}_{k}^{T}\mathbf{\hat{z}}_{k}  + c_{k}^{z} c_{k}^{p}  \norm{\bv{r}_0}   + c_{k}^{p}  (\mathbf{\hat{z}}_{k}^{T}  W^T\bv{r}_0 ) +  c_{k}^{z}  ( \mathbf{\hat{r}}_{k}^{T} W^T\bv{r}_0 )}$
\State $(\mathbf{\hat x}_{k+1}, c_{k+1}^x) = (\mathbf{\hat x}_{k}, c_{k}^x)+  \alpha_{k} \cdot ( \mathbf{\hat{p}}_k, c_{k}^p)$ 
\State $(\mathbf{\hat{r}}_{k+1},c_{k+1}^r) = (\mathbf{\hat{r}}_{k},c_{k}^r)-  \alpha _{k} \cdot ( \mathbf{\hat{z}}_k + c_{k}^p \cdot (K_G W^T \bv{r}_0)  , \sigma^2  c_{k}^p)$  
\State $[\mathbf{\hat{v}}_{k+1}, \mathbf{\hat{u}}_{k+1}]  = \mathcal{B}( \mathbf{\hat{r}}_{k+1})$
\State  if $\mathbf{\hat{u}}_{k+1}^{T} \mathbf{\hat{r}}_{k+1} +  c_{k+1}^r c_{k+1}^r  \norm{\bv{r}_0} + 2  c_{k+1}^r (\mathbf{\hat{r}}_{k+1}^{T} W^T\bv{r}_0) \leq \epsilon $ exit loop 
\State $\beta_{k}  =  \frac{\mathbf{\hat{u}}_{k+1}^{T} \mathbf{\hat{r}}_{k+1} +  c_{k+1}^r c_{k+1}^r + 2  c_{k+1}^r (\mathbf{\hat{r}}_{k+1}^{T}  W^T\bv{r}_0) }{\mathbf{\hat{u}}_{k}^{T} \mathbf{\hat{r}}_{k} +  c_{k}^r c_{k}^r \norm{\bv{r}}_0 + 2  c_{k}^r (\mathbf{\hat{r}}_{k}^{T}W^T\bv{r}_0  )} $ 
\State $(\mathbf{\hat{p}}_{k+1},c_{k+1}^p) = (\mathbf{\hat{r}}_{k+1},c_{k+1}^r) + \beta_{k} \cdot (\mathbf{\hat{p}}_{k},c_{k}^p) $
\State $\mathbf{\hat{s}}_{k+1} = \mathbf{\hat{u}}_{k+1} + \beta_{k} \cdot \mathbf{\hat{s}}_{k} $
\State $\mathbf{\hat{z}}_{k+1} = \mathbf{\hat{v}}_{k+1} + \beta_{k} \cdot \mathbf{\hat{z}}_{k} $
\EndFor
\State \textbf{return} $ \bv{x}_{k+1} = W \bv{ \hat x}_{k+1} + c_{k+1}^x \cdot \bv{r}_0 + \bv{x}_0$
\EndProcedure
\end{algorithmic}
\end{algorithm}

\subsection{Efficiently Factorized Conjugate Gradient Algorithms}
We now present Efficiently factorized conjugate gradient (EFCG) (Algorithm \ref{alg:efcg_2}), which improves on our basic Factorized CG algorithm (Algorithm \ref{alg:fcg}) by avoiding any extra multiplication with $W$ and $W^TW$. The central idea of EFCG is to exploit the fact that each time we perform a matrix-vector multiplication with the GSGP operator $ K_GW^TW + \sigma^2I$, we also must perform one with $W^TW$. We can save the result of this multiplication to avoid repeated work.  In particular, this lets us avoid extra  MVM costs associated with $W^TW$ present in factorized inner products steps of Algorithm \ref{alg:fcg}. Similar to Algorithm \ref{alg:fcg}, Algorithm \ref{alg:efcg_2} also maintains iterates CG (Algorithm \ref{alg:cg}) exactly in the same compressed form $\bv{x}_k = W \bv{\hat x}_k + c_{k}^x \bv{r}_0 + \bv{x}_0$, $\bv{r}_k = W \bv{\hat r}_k + c_{k}^r \bv{r}_0$, and $\bv{p}_k = W \bv{\hat p}_k + c_{k}^p \bv{r}_0$.

We let $[\bv v,\bv u] =\mathcal{B}(\bv x)$ denote the operation that returns $\bv v = \left[ K_GW^TW + \sigma^2I \right] \bv x$ and $\bv u = W^T W \bv x$. Since $\bv u$ must be computed as in intermediate step in computing $\bv v$, this operation has the same cost as a standard matrix-vector-multiplicaiton with $\left[ K_GW^TW + \sigma^2I \right]$. Notice that Algorithm \ref{alg:efcg_2} performs just  one $\mathcal{B}(\bv{x})$ operation per iteration, requiring a single matrix vector multiplication with each of $K_G$ and $W^T W$ per iteration. Both $K_G W^T \bv{r}_0$ and $W^T \bv{r}_0$ are precomputed.
 
 
In addition to $\mathcal{B}(\bv{x})$  operation, the superior efficiency of the EFCG Algorithm \ref{alg:efcg_2} over CG and Factorized CG can mainly be attributed to following facts: 

 \begin{itemize}
%
%
\item  It uses four new iterates:  $\mathbf{\hat{v}}_k = \left[ K_GW^TW + \sigma^2I \right] \mathbf{\hat{r}}_k$,     $\mathbf{\hat{u}}_k = W^TW \mathbf{\hat{r}}_k$,   $\mathbf{\hat{z}}_k = \left[ K_GW^TW + \sigma^2I \right] \mathbf{\hat{p}}_k$ and  $\mathbf{\hat{s}}_k = W^TW \mathbf{\hat{p}}_k$. Given these iterates all factorized inner products can be computed without any extra multiplication with  $W^TW$.
 \item In case, the initial solution $\bv{x}_0 = \bv{0}$, which is the most common choice in practice for CG,  $\tilde{K} \bv{x}_0$  multiplication can be avoided.  Also, observe that in SKI mean and covariance approximation (Definition \ref{def:kissgp}),  we only need  $W^T \bv{x}_{k+1}$ which is equal  to $W^TW \bv{\hat x}_{k+1} + W^T\bv{r}_0 + W^T\bv{x}_0$. Since,  $W^T\bv{r}_0$ is pre-computed and $W^T\bv{x}_0 =0$, no extra multiplication with $W$ or $W^TW$ is required other than computing  $K_G W^T \bv{r}_0$ and $W^T \bv{r}_0$.
 \end{itemize} 
 
 In Algorithm \ref{alg:efcg}, we present a further simplified variant on EFCG for the case when initial residual $\bv{r}_0$ is in the span of $W$ to directly compute $W^T\bv{x}_{k+1}$ where $\bv{x}_{k+1}$ is the final solution returned by the EFCG algorithm. Observe SKI mean and covariance expressions of SKI definition (i.e., Definition \ref{def:kissgp}), we always need to post-process $\bv{x}_{k+1}$ as $W^T\bv{x}_{k+1}$ to estimate them. Unlike EFCG, simplified EFCG (i.e., Algorithm \ref{alg:efcg}) maintains a compressed form for $\bv{r}_k$ using $\bv{\hat r}_k$ (as  $\bv{r}_k = W \bv{\hat r}_k$) and doesn't maintain $\bv{p}_k$. In addition to that, Algorithm \ref{alg:efcg}  also maintains another iterate $\bv{\hat x}^d_{k+1}$ such that $W^T\bv{x}_{k+1} =  \bv{\hat x}^d_{k+1} + W^T\bv{x}_{0}$. 
 
 \begin{algorithm}
\caption{Simplified EFCG -- Initial residual (i.e., $\bv{r}_0$) is in span of $W$}
\label{alg:efcg}
\begin{algorithmic}[1]
\Procedure {EFCG}{$K_G, W,\bv{\hat r}_0 , \sigma, \epsilon$} 
\State \textit{New Iterates:} 
\State \hspace{1cm} $\mathbf{\hat{v}}_k = \left[ K_GW^TW + \sigma^2I \right] \mathbf{\hat{r}}_k$,$\mathbf{\hat{u}}_k = W^TW \mathbf{\hat{r}}_k$
\State \hspace{1cm}  $\mathbf{\hat{z}}_k = \left[ K_GW^TW + \sigma^2I \right] \mathbf{\hat{p}}_k$, $\mathbf{\hat{s}}_k = W^TW \mathbf{\hat{p}}_k$
\State $  \mathbf{\hat x}^{d}_{0}   = \mathbf{0}$
\State $ \mathbf{\hat{v}}_{0},   \mathbf{\hat{u}}_{0}   = \mathcal{B}(\mathbf{\hat{r}}_0)$
\State $ \mathbf{\hat{z}}_{0}  = \mathbf{\hat{v}}_{0}, \mathbf{\hat{s}}_{0}  =\mathbf{\hat{u}}_{0}$ 
\For{$k = 1$ to $maxiter$} 
\State $\alpha_{k} = {\frac{\mathbf{\hat{u}}_{k}^T  \mathbf{\hat{r}}_{k}}{\mathbf{\hat{s}}_k^T  \mathbf{\hat{z}}_k}} $ 
\State $ \mathbf{\hat x}^{d}_{k+1} = \mathbf{\hat x}^{d}_{k} + \alpha_{k} \cdot \mathbf{\hat{s}}_k $ 
\State $\mathbf{\hat{r}}_{k+1} = \mathbf{\hat{r}}_k - \alpha _{k} \cdot \mathbf{\hat{z}}_k $ 
\State $ \mathbf{\hat{v}}_{k+1}, \mathbf{\hat{u}}_{k+1}  = \mathcal{B}( \mathbf{\hat{r}}_{k+1})$
\State  if $\mathbf{\hat{u}}_{k+1}^{T} \mathbf{\hat{r}}_{k+1} \leq \epsilon $ exit loop 
\State $\beta_{k}  =  \frac{\mathbf{\hat{u}}_{k+1}^{T} \mathbf{\hat{r}}_{k+1}}{\mathbf{\hat{u}}_{k}^{T} \mathbf{\hat{r}}_{k}} $ 
\State $\mathbf{\hat{s}}_{k+1} = \mathbf{\hat{u}}_{k+1} + \beta_{k} \cdot \mathbf{\hat{s}}_{k} $
\State $\mathbf{\hat{z}}_{k+1} = \mathbf{\hat{v}}_{k+1} + \beta_{k} \cdot \mathbf{\hat{z}}_{k} $
\EndFor
\State \textbf{return}  $\mathbf{\hat x}^{d}_{k+1} $
\EndProcedure
\end{algorithmic}
\end{algorithm}

The requirement of initial residual $\bv{r}_0$ to be in the span of $W$ can be met in two ways. For example, for SKI posterior mean inference, we set  $\mathbf{{x}}_0= \frac{1}{\sigma^2} \cdot \mathbf{y}$ implying $\mathbf{r}_0 = \mathbf{y} -  \left[ WK_GW^T + \sigma^2I\right] \frac{1}{\sigma^2} \cdot \mathbf{y} = - \frac{1}{\sigma^2} \cdot  W K_GW^T \mathbf{y}$, which lies in the span of $W$.  Consequently, we initiate Algorithm \ref{alg:efcg} with $\mathbf{\hat r}_0 = -  \frac{1}{\sigma^2} \cdot K_GW^T \mathbf{y} \in \R^{m \times 1}$ and following the invariance of simplified EFCG algorithm, compute transformed solutions as  $W^T\bv{x}_{k+1} =  \bv{\hat x}^d_{k+1} +  \frac{1}{\sigma^2} \cdot  W^Ty$. Notice that simplified EFCG (unlike EFCG) does not require pre-computations of terms $K_G W^T \bv{r}_0$ and $W^T \bv{r}_0$. In fact, simplified ECFG requires only only multiplication with $W^T$, i.e., to obtain $W^Ty$ which is sufficient for both, initialization of $\mathbf{r}_0$ and  the transformed final solution $W^T\bv{x}_{k+1}$ (which is equal to $W^T\left( \tilde{K}_X + \sigma^2 I \right)^{-1} \bv{y}$). Hence, simplified EFCG can compute SKI posterior mean using only sufficient statistics ($W^TW$ and $W^T\bv{y}$). 
 
A second way to meet the requirement of the initial residual $\bv{r}_0$ being in the span of $W$, is by setting $\mathbf{{x}}_0 = \bv{0}$  when $\bv{\hat{y}}$ is provided such that $\bv{y} = W \bv{\hat y}$. This is the case, e.g., in posterior covariance approximation. The final transformed solution $W^T\bv{x}_{k+1}$ in this setting reduces to $\bv{\hat x}^d_{k+1}$. Notice for this setting also, we need only $W^TW$  and do not require the matrix $W$. 

Consquently, simplified EFCG can compute both SKI posterior mean and covariance function using onlythe sufficient statistics ($W^TW$ and $W^T\bv{y}$) and without even realizing $W$ matrix in memory. 


\subsection{Factorized Lanczos Algorithms}
The Lanczos algorithm can be utilized to factorize a symmetric matrix $A \in \R^{n \times n }$ as $QTQ^{T}$ such that $T \in \R^{n \times n}$ is a symmetric tridiagonal matrix and $Q \in \R^{n \times n}$ is orthonormal matrix. Previously, it has been used with $k$ iterations (i.e. Algorithm \ref{alg:lanczos}) to compute low-rank and fast approximations of SKI covariance matrix and log-likelihood of the data. For further details, we refer readers to  \cite{ConstantTimePD, ScalableLD}.

\begin{algorithm}
\caption{Lanczos Algorithm (LA)}
\label{alg:lanczos}
\begin{algorithmic}[1]
\Procedure {LA}{$K_G, W, \bv{b}, \sigma, k$} 
\State $\mathbf{q}_{0} = \mathbf{0}, \mathbf{q}_{1} = \mathbf{b}, \beta_1 = 0$ 
\State $Q_{:, 1} =\mathbf{q}_{1} $ 
\For{$i = 1$ to $k$} 
    \State $\mathbf{q}_{i+1} = \tilde{K} \mathbf{q}_{i} - \beta_{i} \cdot \mathbf{q}_{i-1}$
    \State $\alpha_i = \mathbf{q}_{i}^{T}\mathbf{q}_{i+1}$
    \State $T_{i,i} = \alpha_i$
    \State  if $i  == k$ then exit loop 
    \State $\mathbf{q}_{i+1} = \mathbf{q}_{i} - \alpha_{i} \cdot  \mathbf{q}_{i}$
    \State $\mathbf{q}_{i+1} = \mathbf{q}_{i+1} - \left[Q_{:, 1}, ..., Q_{:, i} \right] \left( \left[Q_{:, 1}, ..., Q_{:, i} \right]^{T} \mathbf{q}_{i+1} \right) $
    \State $\beta_{i+1} = \norm{\mathbf{q}_{i+1}}$
    \State $T_{i, i+1} = T_{i, i+1} = \beta_{i+1}$
    \State $\mathbf{q}_{i+1} = \frac{1}{\beta_{i+1}}  \cdot    \mathbf{q}_{i+1}  $     
    \State $Q_{:, i+1} =\mathbf{q}_{i+1} $ 
\EndFor
\State \textbf{return} $Q$, $T$
\EndProcedure
\end{algorithmic}
\end{algorithm}


\begin{algorithm}
\caption{Factorized Lanczos Algorithm (FLA)}
\label{alg:factorized_lanczos}
\begin{algorithmic}[1]
\Procedure{FLA}{$K_G, W, \bv{b}, \sigma, k$} 
\State $\mathbf{\hat{q}}_{0} = \mathbf{0}, c_0^{q} = 0,  \mathbf{\hat{q}}_{1} = \mathbf{0}, c_1^{q} = 1, \beta_1 = 0$ 
\State $\hat{Q}_{:, 1} = \mathbf{\hat{q}}_{1}, \mathbf{\Lambda} = \mathbf{0} \in \R^{k \times 1}, \bv{d} = \mathbf{0} \in \R^{k \times 1} $
\For{$i = 1$ to $k$} 
  \State $(\mathbf{\hat{q}}_{i+1} ,c_{i+1}^q) = (\mathbf{\hat{q}}_{i} ,c_{i}^q)-  \beta_{i} \cdot \mathcal{A}( \mathbf{\hat{q}}_{i-1} , c_{i-1}^q)$ 
    \State $\alpha_{i} 
=\langle (\mathbf{\hat{q}}_{i} ,c_{i}^q),  (\mathbf{\hat{q}}_{i+1} ,c_{i+1}^q)\rangle $
    \State $T_{i,i} = \alpha_i;$ \hskip3mm $\mathbf{d}_{i} = c_i^{q}$
    \State  if $i  == k$ then exit loop
     \State $(\mathbf{\hat{q}}_{i+1} ,c_{i+1}^q) = (\mathbf{\hat{q}}_{i} ,c_{i}^q)-  \alpha_{i} \cdot ( \mathbf{\hat{q}}_{i} , c_{i}^q)$ 
    \State $ \mathbf{\Lambda}_{j} = \langle (\hat{Q}_{:,j} , c_{j}^q), (\mathbf{\hat{q}}_{i+1} ,c_{i+1}^q) \rangle;$ \hskip3mm $\forall j \in \{1,...,i\}$
    \State $\mathbf{\hat{q}}_{i+1} = \mathbf{\hat{q}}_{i+1} -  \hat{Q}_{:, 1:i}  \mathbf{\Lambda}_{1:i};$ \hskip3mm $c_{i+1}^{q} = c_{i+1}^{q} - \mathbf{d}_{1:i}^{T} \mathbf{\Lambda}_{1:i}$
    \State $\beta_{i+1} = \sqrt{\langle (\mathbf{\hat{q}}_{i+1} ,c_{i+1}^q) , (\mathbf{\hat{q}}_{i+1} ,c_{i+1}^q) \rangle}$
    \State $T_{i, i+1} = T_{i, i+1} = \beta_{i+1}$
    \State $\mathbf{\hat{q}}_{i+1} = \frac{1}{\beta_{i+1}}  \cdot \mathbf{\hat{q}}_{i+1};$ \hskip3mm $c_{i+1}^{q} = \frac{c_{i+1}^{q}}{\beta_{i+1}}$
    \State $\hat{Q}_{:, i+1} =\mathbf{\hat{q}}_{i+1} $ 
\EndFor
\State \textbf{return} $\hat{Q}$, $T$, $\mathbf{d}$
\EndProcedure
\end{algorithmic}
\end{algorithm}

\begin{algorithm}
\caption{Efficiently Factorized Lanczos algorithm (EFLA)}
\label{alg:efficiently_factorized_lanczos}
\begin{algorithmic}[1]
\Procedure{EFLA}{$K_G, W, \bv{b}, \sigma, k$} 
\State \textit{New Iterates:} 
\State \hspace{1cm}   $\hat{P}_{:, i} = W^TW\hat{Q}_{:, i}$ and $\bv{\hat s}_{i} = \left[K_G W^TW + \sigma I  \right] \mathbf{\hat{q}}_i$
\State $\mathbf{\hat{q}}_{0} = \mathbf{0}, c_0^{q} = 0,  \mathbf{\hat{q}}_{1} = \mathbf{0}, c_1^{q} = 1, \beta_1 = 0$ 
\State $\hat{Q}_{:, 1:k} = [\mathbf{0}, ...,\mathbf{0}] \in \R^{m \times k}, \hat{P}_{:, 1:k} = [\mathbf{0}, ...,\mathbf{0}] \in \R^{m \times k}$ 
\State $\hat{Q}_{:, 1} = \mathbf{\hat{q}}_{1}, \mathbf{\Lambda} = \mathbf{0} \in \R^{k \times 1}, \bv{\hat s}_{i} = \mathbf{0} \in \R^{m \times 1}, \bv{d} = \mathbf{0} \in \R^{k \times 1}  $
\For{$i = 1$ to $k$} 
    \State $\mathbf{\hat{q}}_{i+1} = \bv{\hat s}_{i}  + c_i^{q} \cdot K_GW^T\mathbf{b} - \beta_{i} \cdot \mathbf{\hat{q}}_{i-1}$
    \State $c_{i+1}^{q} = \sigma^2 c_i^{q} - \beta_i c_{i-1}^{q}$
   \State $\alpha_i = \hat{P}_{:, i}^T\mathbf{\hat{q}}_{i+1} + c_i^{q} c_{i+1}^{q} + \left(c_i^{q} \cdot \mathbf{\hat{q}}_{i+1} + c_{i+1}^{q} \cdot \mathbf{\hat{q}}_{i} \right)^{T} W^T\mathbf{b} $
    \State $T_{i,i} = \alpha_i;$ \hskip3mm $\mathbf{d}_{i} = c_i$
    \State  if $i  == k$ then exit loop
        \State $(\mathbf{\hat{q}}_{i+1} ,c_{i+1}^q) = (\mathbf{\hat{q}}_{i} ,c_{i}^q)-  \alpha_{i} \cdot ( \mathbf{\hat{q}}_{i} , c_{i}^q)$ 
    \State $ \mathbf{\Lambda}_{1:i} = \hat{P}_{:, 1:i}^T \mathbf{\hat{q}}_{i+1} + \left(c_{i+1}^{q} + \mathbf{\hat{q}}_{i+1}^{T}W^T\mathbf{b} \right) \cdot \mathbf{c}_{1:i}  + c_{i+1}^{q} \cdot \hat{Q}_{:, 1:i}^TW^T\mathbf{b}$
    \State $\mathbf{\hat{q}}_{i+1} = \mathbf{\hat{q}}_{i+1} - \hat{Q}_{:, 1:i} \mathbf{\Lambda}_{1:i};$ \hskip3mm $c_{i+1}^{q} = c_{i+1}^{q} - \mathbf{d}_{1:i}^{T} \mathbf{\Lambda}_{1:i}$
    \State $\bv{\hat s}_{i+1} , \hat{P}_{:, i+1} = \mathcal{B} (\mathbf{\hat{q}}_{i+1})$
    \State $\beta_{i+1} = \sqrt{\hat{P}_{:, i+1}^T \mathbf{\hat{q}}_{i+1} + c_{i+1}^{q} c_{i+1}^{q} + 2 c_{i+1}^{q} \mathbf{\hat{q}}_{i+1}^{T}W^{T}\mathbf{b}}$
    \State $T_{i, i+1} = T_{i, i+1} = \beta_{i+1}$
     \State $\mathbf{\hat{q}}_{i+1} = \frac{1}{\beta_{i+1}}  \cdot \mathbf{\hat{q}}_{i+1};$ \hskip3mm $c_{i+1}^{q} = \frac{c_{i+1}^{q}}{\beta_{i+1}}$
    \State $\bv{\hat s}_{i+1}   =\frac{1}{\beta_{i+1}}  \cdot \bv{\hat s}_{i+1}  ;$ \hskip3mm $\hat{P}_{:, i+1} =  \frac{1}{\beta_{i+1}}  \cdot \hat{P}_{:, i+1}$
    \State $\hat{Q}_{:, i+1} =\mathbf{\hat{q}}_{i+1} $ 
\EndFor
\State \textbf{return} $\hat{Q}$, $T$, $\mathbf{d}$
\EndProcedure
\end{algorithmic}
\end{algorithm}

Similar to Factorized CG, we derive factorized Lanczos algorithm (FLA) using  factorized inner products and matrix vector multiplication, as described in Algorithm \ref{alg:factorized_lanczos}. We maintain all iterates in $\R^{m \times 1}$ similar to Factorized CG, in particular,  ${Q}_{:,i} \in \R^{n \times 1}$ vectors are maintained in compressed form such that $Q_{:,i} = W\hat{Q}_{:,i} + d_{i} \cdot \bv{b}$. The $T \in \R^{k \times k}$ matrix of Lanczos algorithm is retained as it is in FLA. Furthermore, in a manner similar to EFCG, we derive  efficient factorized algorithm (EFLA) as shown in Algorithm  \ref{alg:efficiently_factorized_lanczos}. Specifically, EFLA relies on two new iterates: $\hat{P}_{:, i} = W^TW\hat{Q}_{:, i}$ and $\bv{\hat s}_{i}  = \left[K_G W^TW + \sigma I  \right] \mathbf{\hat{q}}_i$ and maintains iterates of Lanczos algorithm as $Q_{:,i} = W\hat{Q}_{:,i} + d_{i} \cdot \bv{b}$, similar to FLA. Notice that EFLA only requires  one $\mathcal{B}(\bv{x})$ operation per loop thereby avoiding any extra MVMs with $W$ and $W^TW$, except one time pre-computations of $K_GW^T\bv{b}$ and $W^T\bv{b}$.


\section{Experiments -- Omitted Details and Additional Results}
\label{app:experiments_details_and_additional_results}

\subsection{Hardware and hyper-parameters details}
\label{app:hyper-parameters and dataset details}

We run all of our experiments on Intel Xeon Gold 6240 CPU @ 2.60GHz with 10 GB of RAM.  In all experimental settings, our kernels are squared exponential kernels wrapped within a scale kernel \cite{kissgp, ScalableLD}. Therefore, our hyper-parameters are $\sigma$, length-scales and output-scale as also presented in Table \ref{tab:hyper_parametetrs}. Length-scales are specific to each dimension for multi-dimensional datasets. For sine and sound datasets, we have utilized GPytorch to optimize hyper-parameters. For precipitation and radar datasets, we have considered previously optimized parameters in \citep{ScalableLD} and in \cite{angell2018inferring}, respectively. 

\begin{table}[H]
\centering
\begin{tabular}[t]{lccc}
\toprule
Dataset & $\sigma$ & Length-scale & Output-scale \\ \hline
Sine & 0.074 & 0.312 & 1.439  \\ \hline
Sound & 0.009 & 10.895 & 0.002 \\ \hline
Radar & 50.000 & [0.250, 0.250, 200] & 3.500 \\ \hline
Precipitation & 3.990 & [3.094, 2.030, 0.189]  & 2.786 \\ \hline
\end{tabular}
\caption{Hyper-parameters used for all datasets. Length-scale is of size $d$ of each dataset.}
\label{tab:hyper_parametetrs}
\end{table}

\subsection{Results: Synthetic sine dataset}

Figure \ref{fig:results_sine_dataset_additional} depicts the number of iteration and pre-processing time taken by GSGP and SKI for synthetic sine dataset wrt number of sample, for the setting on which Figure \ref{fig:sine_per_iteration_time_vs_n} reports the results. The number of iterations for GSGP and SKI are always close and possibly differ only due to finite precision. 

\begin{figure}[h]
    \centering \includegraphics[width=0.6  \textwidth]{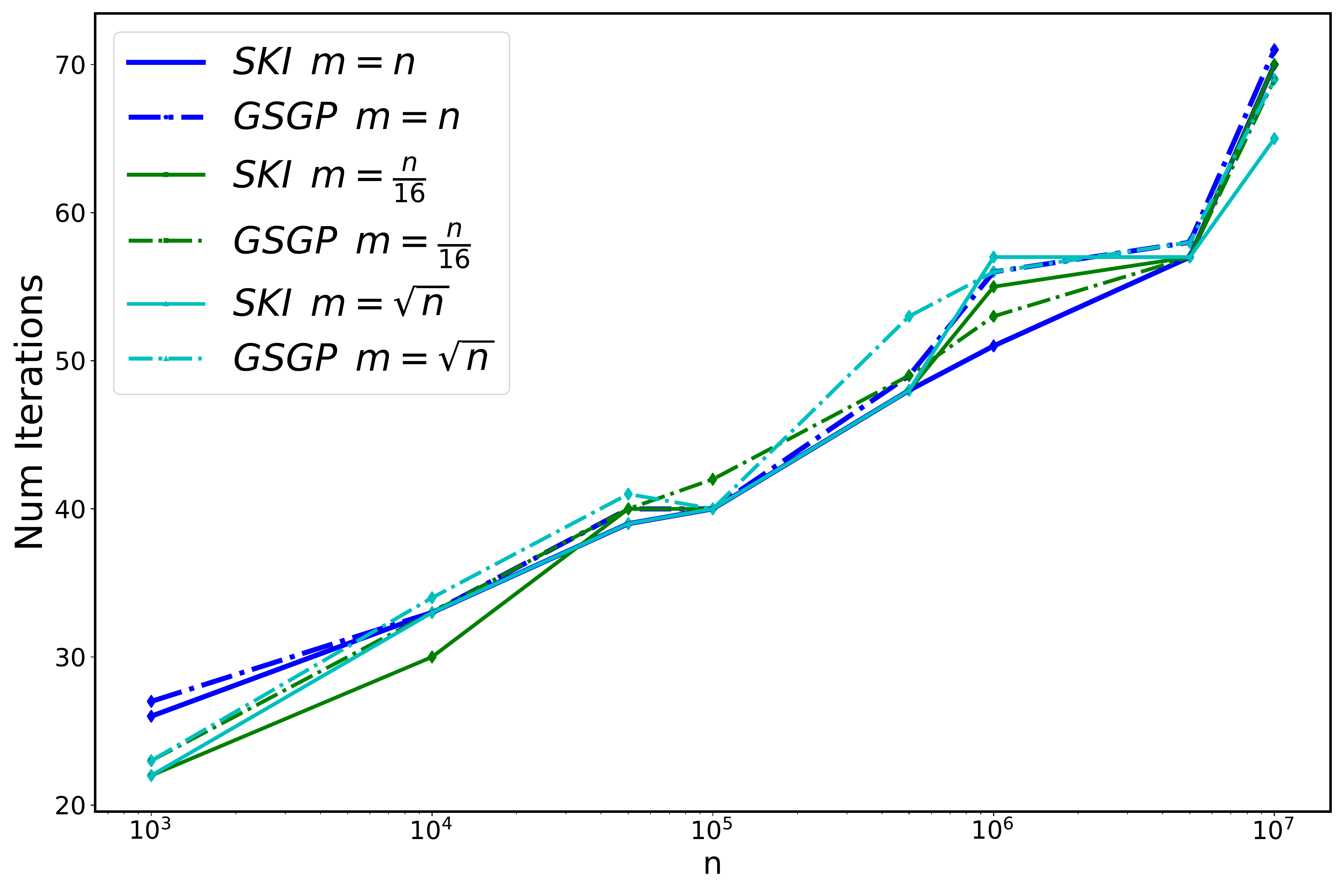}
    \caption{Number of iterations taken by SKI and GSGP on synthetic dataset. Results are averaged over 8 trials.}
    \label{fig:results_sine_dataset_additional}
\end{figure}

\subsection{Results: Precipitation dataset}

\begin{figure}[h]
    \begin{center}
    \begin{subfigure}{.38\textwidth}
    \includegraphics[width=\textwidth]{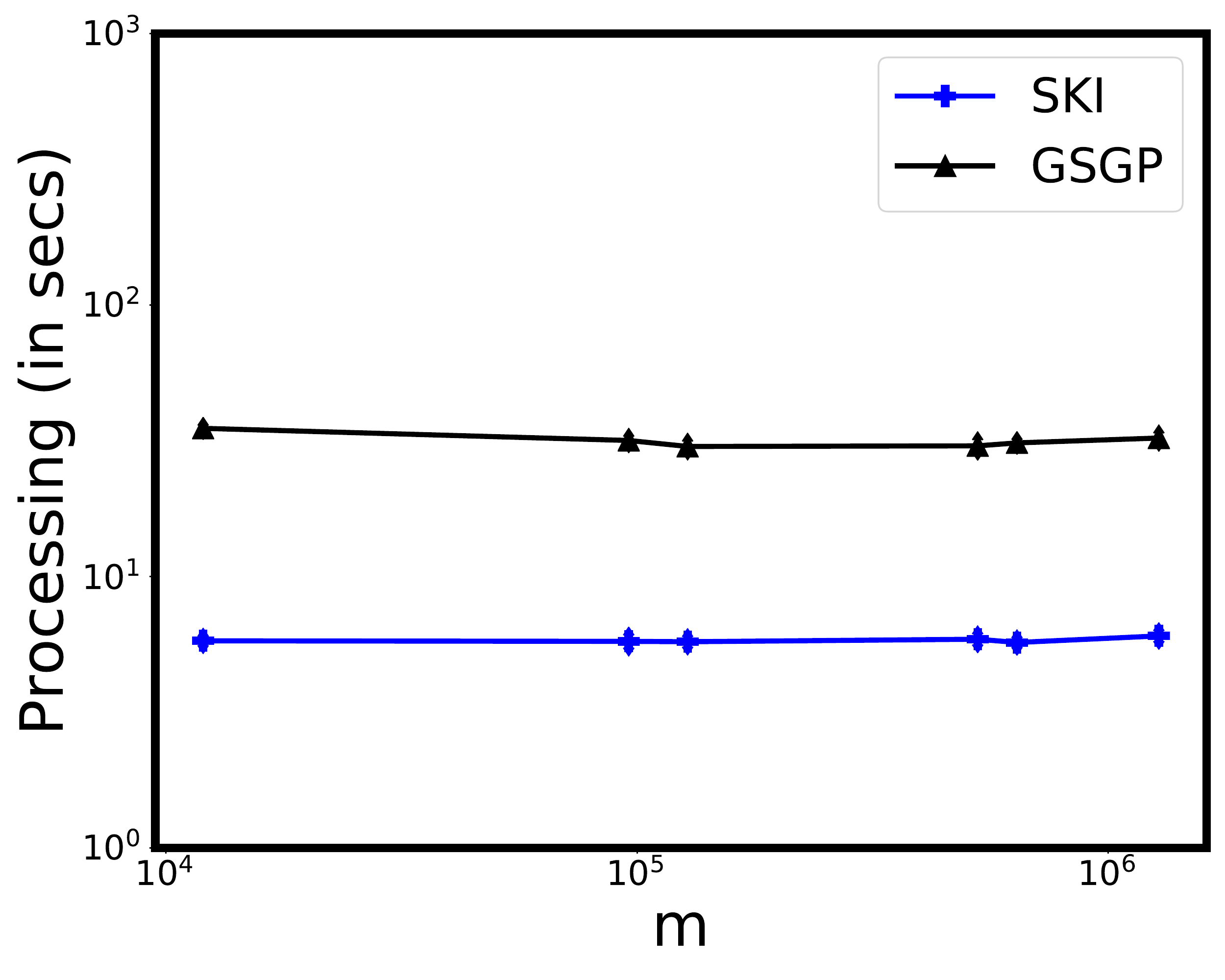}
    \end{subfigure}
    \begin{subfigure}{.38\textwidth}
    \includegraphics[width=\textwidth]{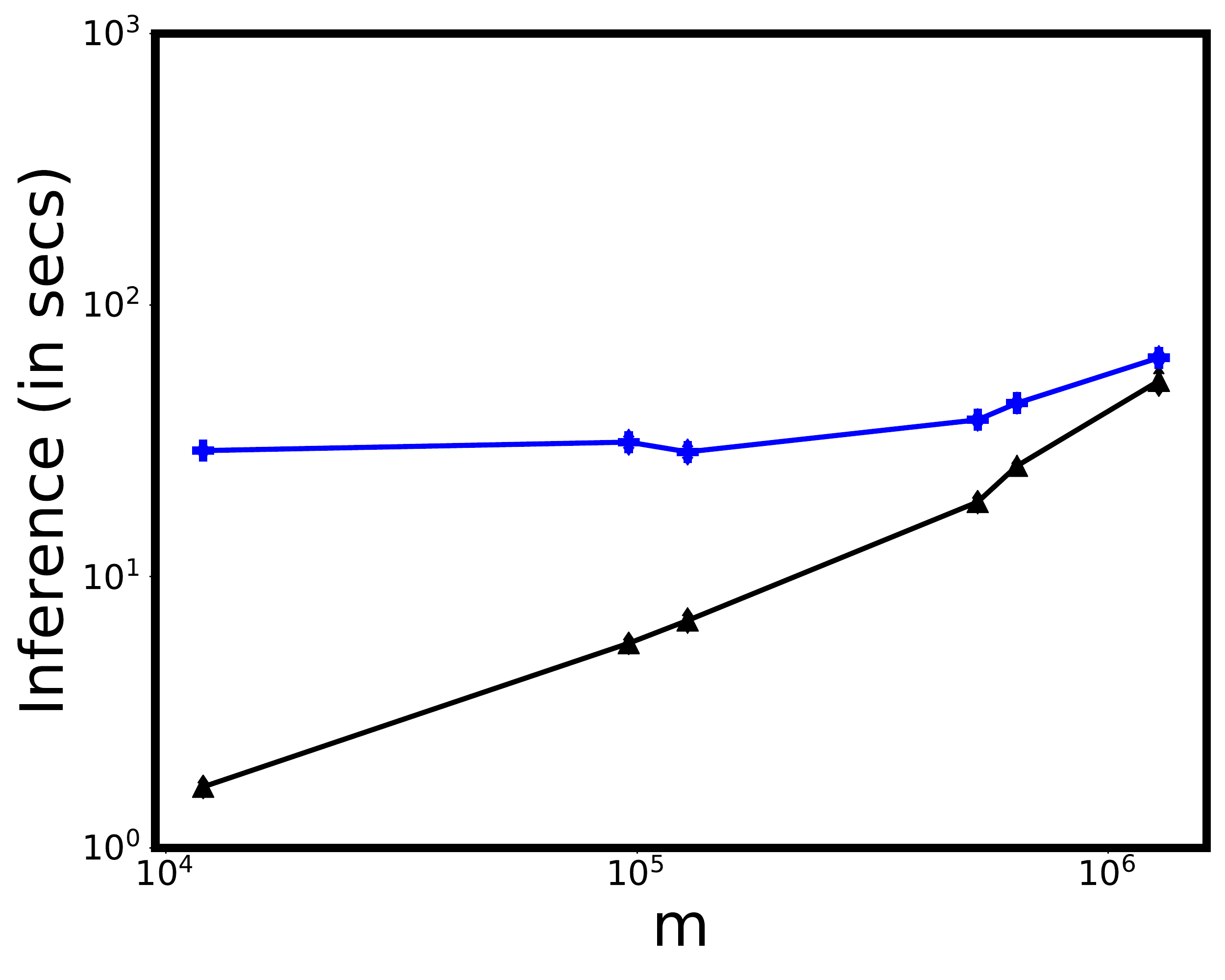}
    \end{subfigure} 
    \begin{subfigure}{.38\textwidth}
    \includegraphics[width=\textwidth]{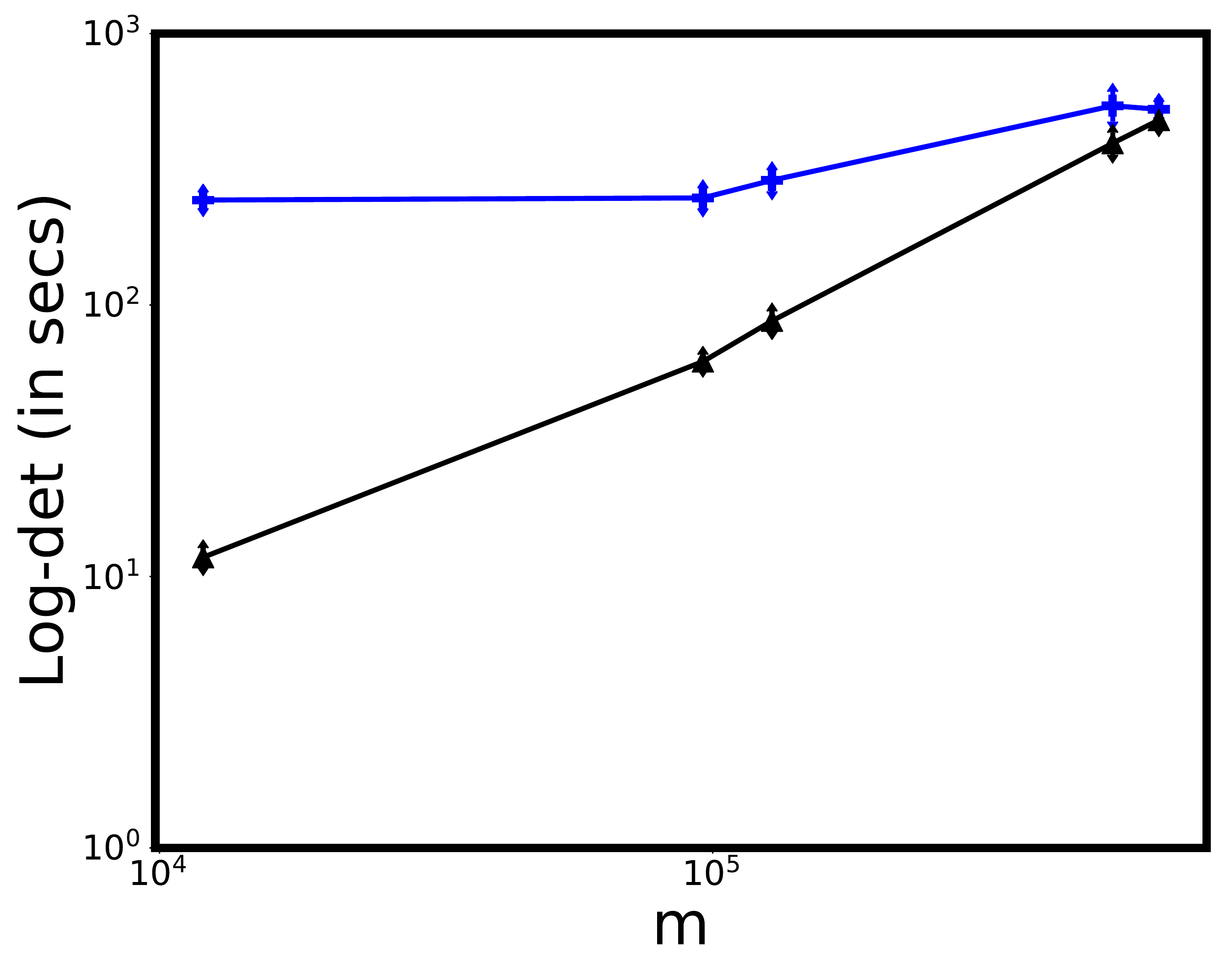} 
    \end{subfigure}
    \begin{subfigure}{.38\textwidth}
    \includegraphics[width=\textwidth]{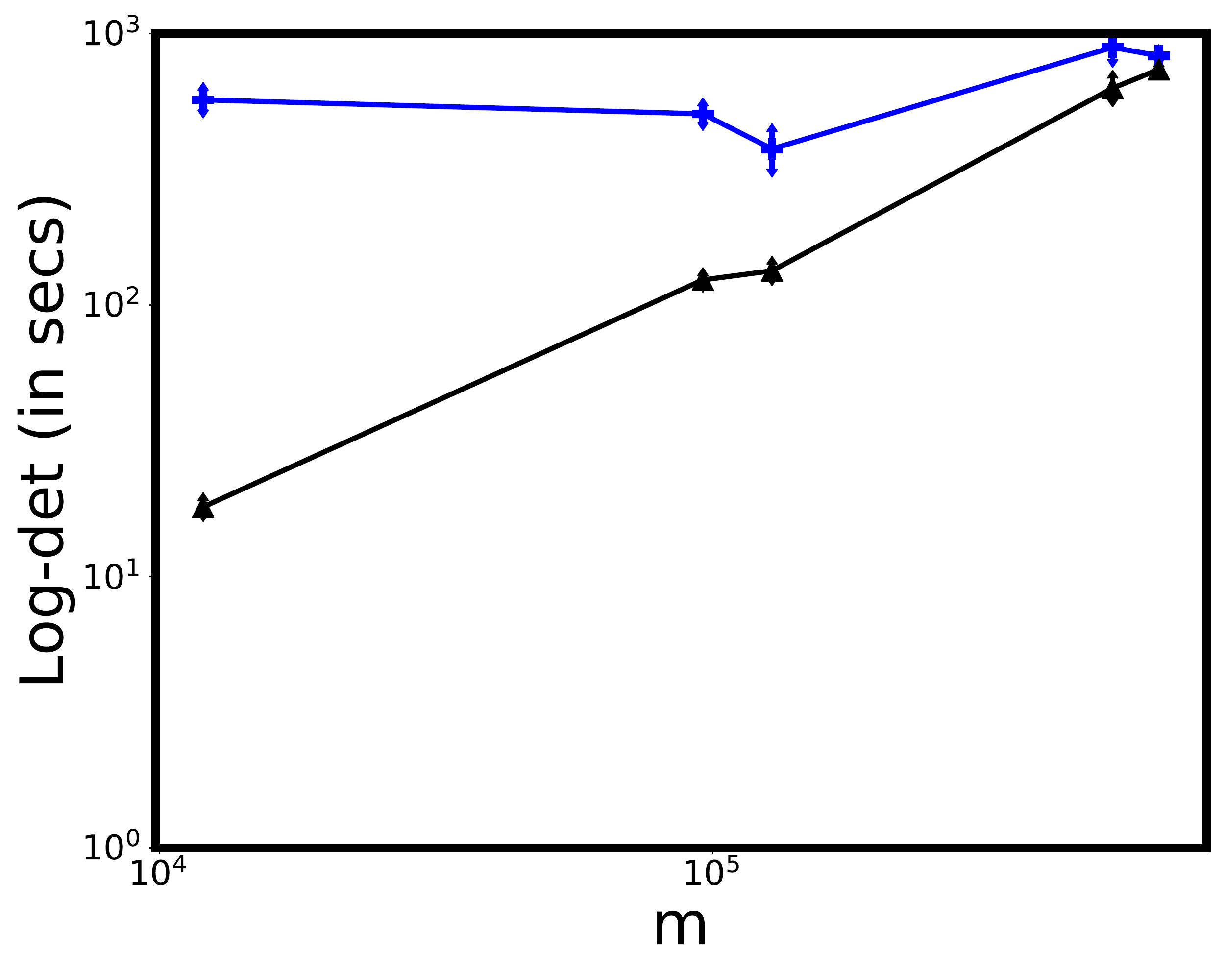} 
    \end{subfigure}
    \end{center}
    \caption{Inference time vs. $m$ for SKI inference tasks on precipitation data set. From left to right: pre-processing, mean inference, log-determinant for $tol=0.1$ and  for $tol=0.01$ and using 30 random vectors.}
    \label{fig:results_on_precip_dataset}
\end{figure}

Figure~\ref{fig:results_on_precip_dataset} shows running time vs. $m$ for GP inference tasks on precipitation data set of $n=528\K$ \cite{ScalableLD}. We consider $m \in \{12\K, 96\K, 128\K, 528\K,  640\K \}$. This is a situation where even for $m > n$, GSGP is faster compared to SKI. 
Pre-processing is up to 6x slower for GSGP due to the need to compute $W^TW$.
To perform only one mean inference, the overall time of GSGP and SKI \emph{including} pre-processing is similar as some of the per-iteration gains are offset by pre-processing. 
However, for the log-determinant computation task (as part of the log-likelihood computation), \emph{several} more iterations of linear solvers are required as also demonstrated by log-det computation in Figure \ref{fig:results_on_precip_dataset}. It is worth noting that pre-processing of GSGP is required only once which can be performed initially for the log-det computation and later be utilized for the posterior mean and covariance inference. Therefore, overall, GSGP is more effective than SKI for all inference tasks. 

\end{appendices}

\end{document}